\DeclarePairedDelimiter\floor{\lfloor}{\rfloor}
\newtheorem{theorem}{Theorem}[section]
\newtheorem{lemma}[theorem]{Lemma}
\newtheorem{proposition}[theorem]{Proposition}
\theoremstyle{definition}
\theoremstyle{remark}
\newtheorem{remark}{Remark}
\newcommand\numberthis{\addtocounter{equation}{1}\tag{\theequation}}
\newcommand{\grad}{\nabla}
\newcommand{\E}{\mathbb E}
\newcommand{\R}{\mathbb R}
\renewcommand{\P}{\mathbb P}
\newtheorem{assump}{}
\newcommand{\cU}{\mathcal U}
\DeclareMathOperator{\prox}{prox}
\DeclareMathOperator{\argmin}{argmin}
\begin{document}

%

%

\title{SGD with Variance Reduction beyond Empirical Risk Minimization}

\author[1]{Massil Achab\footnote{massil.achab@cmap.polytechnique.fr}}
\author[2]{Agathe Guilloux\footnote{agathe.guilloux@upmc.fr}}
\author[1]{St\'ephane Gaiffas\footnote{stephane.gaiffas@cmap.polytechnique.fr}}
\author[1]{Emmanuel Bacry\footnote{emmanuel.bacry@polytechnique.fr}}
\affil[1]{Centre de Math\'ematiques Appliqu\'ees, CNRS, Ecole Polytechnique, UMR 7641, 91128 Palaiseau, France}
\affil[2]{Laboratoire de Statistique Th\'eorique et Appliqu\'ee, Universit\'e Pierre et Marie Curie, 4 place Jussieu, 75005 Paris, France}

\renewcommand\Affilfont{\itshape\small}

\maketitle

\begin{abstract}
We introduce a doubly stochastic proximal gradient algorithm for optimizing a finite average of smooth convex functions, whose gradients depend on numerically expensive expectations.
Indeed, the effectiveness of SGD-like algorithms relies on the assumption that the computation of a subfunction's gradient is cheap compared to the computation of the total function's gradient. This is true in the Empirical Risk Minimization (ERM) setting, but can be false when each subfunction depends on a sequence of examples.
Our main motivation is the acceleration of the optimization of the regularized Cox partial-likelihood (the core model in survival analysis), but other settings can be considered as well.

The proposed algorithm is doubly stochastic in the sense that gradient steps are done using stochastic gradient descent (SGD) with variance reduction, and the inner expectations are approximated by a Monte-Carlo Markov-Chain (MCMC) algorithm.
We derive conditions on the MCMC number of iterations guaranteeing convergence, and obtain a linear rate of convergence under strong convexity and a sublinear rate without this assumption.

We illustrate the fact that our algorithm improves the state-of-the-art solver for regularized Cox partial-likelihood on several datasets from survival analysis.

\end{abstract}

\smallskip
\noindent \textbf{Keywords.} Convex Optimization, Stochastic Gradient Descent, Monte Carlo Markov Chain, Survival Analysis, Conditional Random Fields

\section{Introduction}
\label{sec:intro}

During the past decade, advances in biomedical technology have brought high dimensional data to biostatistics and survival analysis in particular. Today's challenge for survival analysis lays in the analysis of massively high dimensional (numerous covariates) and large-scale (large number of observations) data, see in particular~\cite{murdoch2013inevitable}. Areas of application outside of biostatistics, such as economics (see~\cite{einav2014economics}), or actuarial sciences~(see \cite{richards2012handbook}) are also concerned.

One of the core models of survival analysis is the Cox model (see~\cite{CoxModel}) for which we propose, in the present paper, a novel scalable optimization algorithm tuned to handle massively high dimensional and large-scale data. Survival data $(y_i, x_i, \delta_i)_{i=1}^{n_\text{pat}}$ contains, for each individual $i=1, \ldots, n_\text{pat}$, a features vector $x_i \in \R^d$, an observed time $y_i \in \R_+$, which is a failure time if $\delta_i = 1$ or a right-censoring time if $\delta_i = 0$.
If $D = \{ i : \delta_i = 1 \}$ is the set of patients for which a failure time is observed, if $n = |D|$ is the total number of failure times, and if $R_i = \{ j : y_{j} \ge y_i \}$ is the index of individuals still at risk at time $y_i$, the negative Cox partial log-likelihood writes
\begin{equation}
  \label{eq:cox-partial}
  - \ell(\theta) = \frac{1}{n} \sum_{i \in D} \Big[ - x_i^\top \theta + \log \Big( \sum_{j \in R_i} \exp ( x_{j}^\top \theta ) \Big)  \Big]
\end{equation}
for parameters $\theta \in \R^d$. This model can be regarded as a regression of the $n$ failure times, using information from the $n_{\text{pat}}$ patients that took part to the study.
With high-dimensional data, a regularization term is added to the partial likelihood to automatically favor sparsity in the estimates, see~\cite{CoxLasso} and~\cite{coxnet} for a presentation of Lasso and elastic-net penalizations, see also the review paper by~\cite{witten2009survival} for an exhaustive presentation. Several algorithms for the Cox model have been proposed to solve the regularized optimization problem at hand, see~\cite{park2007l1, sohn2009gradient,goeman2010l1} among others. These implementations use Newton-Raphson iterations, i.e. large matrices inversions, and can therefore not handle large-scale data.
Cyclical coordinate descent algorithms have since been proposed and successfully implemented in \texttt{R} packages \texttt{coxnet} and \texttt{fastcox}, see~\cite{coxnet,cocktail}.
More recently \cite{large-surv-analysis} adapted the column relaxation with logistic loss algorithm of~\cite{zhang2000value} to the Cox model.
The fact that all these algorithms are of cyclic coordinate descent type solve the problem, supported by Newton-Raphson type algorithms, of large matrices inversions.

Yet another computationnally costly problem, specific to the Cox model, has not been fully addressed: the presence of cumulative sums (over indices ${j \in R_i}$) in the Cox partial likelihood.
This problem was noticed in~\cite{large-surv-analysis}, where a numerical workaround exploiting sparsity is proposed to reduce the computational cost.
The cumulative sum prevents from successfully applying stochastic gradient algorithms, which are however known for their efficiency to handle large scale generalized linear models: see for instance SAG by~\cite{SAG}, SAGA by~\cite{SAGA}, Prox-SVRG by~\cite{SVRG} and SDCA by~\cite{SDCA1} that propose very efficient stochastic gradient algorithms with constant step-size (hence achieving linear rates),
see also Catalyst by~\cite{lin2015universal} that introduces a generic scheme to accelerate and analyze the convergence of those algorithms.

Such recent stochastic gradient algorithms have shown that it is possible to improve upon proximal full gradient algorithms for the minimization of convex problems of the form
\begin{equation}
  \label{eq:min_F}
  \min_{\theta \in \mathbb{R}^d} F(\theta) = f(\theta) + h(\theta) \text{ with } f(\theta) = \frac{1}{n} \sum_{i=1}^n f_i(\theta),
\end{equation}
where the functions $f_i$ are gradient-Lipschitz and $h$ is prox-capable. These algorithms take advantage of the finite sum structure of $f$, by using some form of variance-reduced stochastic gradient descent.
It leads to algorithms with a much smaller iteration complexity, as compared to proximal full gradient approach (FG), while preserving (or even improving) the linear convergence rate of FG in the strongly convex case.
However, such algorithms are relevant when gradients $\grad f_i$ have a numerical complexity much smaller than $\grad f$, such as for linear classification or regression problems, where $\grad f_i$ depends on a single inner product $x_i^\top \theta$ between features $x_i$ and parameters $\theta$.

In this paper, motivated by the important example of the Cox partial likelihood~\eqref{eq:cox-partial}, we consider the case where gradients $\grad f_i$ can have a complexity comparable to the one of $\grad f$.
More precisely, we assume that they can be expressed as expectations, under a probability measure $\pi_\theta^i$, of random variables
$G_i (\theta)$, i.e.,
\begin{equation}
  \label{eq:grad_f_i}
  \nabla f_i(\theta) = \mathbb{E}^{G_i (\theta) \sim \pi_\theta^i} [G_i (\theta)].
\end{equation}
This paper proposes a new doubly stochastic proximal gradient descent algorithm (2SVRG), that leads to a low iteration complexity, while preserving linear convergence under suitable conditions for problems of the form~\eqref{eq:min_F}~+~\eqref{eq:grad_f_i}.

Our main motivation for considering this problem is to accelerate the training-time of the the penalized Cox partial-likelihood.
The function
$-\ell(\theta)$ is convex (as a sum of linear and log-sum-exp functions, see Chapter~3 of~\cite{boyd_book}, and fits in the setting~\eqref{eq:min_F}~+~\eqref{eq:grad_f_i}.
Indeed, fix $i \in D$ and introduce
\begin{equation*}
  f_i(\theta) = - x_i^\top \theta + \log \Big( \sum_{j \in R_i} \exp ( x_{j}^\top \theta ) \Big),
\end{equation*}
so that
\begin{equation*}
  \grad f_i(\theta) = -x_i + \sum_{j \in R_i} x_j \pi_\theta^i(j)
\end{equation*}
where
\begin{equation*}
  \pi_\theta^i(j) = \frac{\exp (x_{j}^\top \theta)}{\sum_{j' \in R_i} \exp ( x_{j'}^\top \theta )},~~~\forall j \in R_i.
\end{equation*}
This entails that $\grad f_i(\theta)$ satisfies~\eqref{eq:grad_f_i} with $G_i(\theta)$ a random variable valued in $\{ -x_i + x_j : j \in R_i \}$ and such that
\begin{equation*}
  \P(G_i(\theta) = -x_i + x_j) = \pi_\theta^i(j)
\end{equation*}
for $j \in R_i$. Note that the numerical complexity of $\grad f_i$ can be comparable to the one of $\grad f$, when $y_i$ is close to $\min_i y_i$ (recalling that $R_i = \{ j : y_{j} \ge y_i \}$). Note also that a computational trick allows to compute $\grad f(\theta)$ with a complexity $O(nd)$. 
This makes this setting quite different from the usual case of empirical risk minimization (linear regression, logistic regression, etc.), where all the gradients $\grad f_i$ share the same low numerical cost.

\section{Comparison with previous work}
\label{sec:comparison_with_previous_work}

\paragraph{SGD techniques.}
\label{par:sgd_techniques}

Recent proximal stochastic gradient descent
algorithms by~\cite{SAGA},~\cite{SVRG},~\cite{SDCA1} and~\cite{SAG} build on the idea of~\cite{robbins1951} and~\cite{kiefer1952}.
Such algorithms are designed to tackle large-scale optimization problems ($n$ is large), where it is
assumed implicitly that the $\grad f_i$ (smooth gradients) have a low
computational cost compared to $\grad f$, and where $h$ is eventually non-differentiable and is dealt with using a backward or projection step using its
proximal operator.

The principle of SGD is, at each iteration $t$, to sample uniformly at random an index $i \sim \cU[n]$, and to apply an update step of the form
\begin{equation*}
  \theta^{t+1} \gets \theta^t - \gamma_t \grad f_i(\theta^t).
\end{equation*}
This step is based on an unbiased but very noisy estimate of the full gradient $\grad f$, so the choice of the step size $\gamma_t$ is crucial since it has to be decaying to curb the variance introduced by random sampling (excepted for averaged SGD in some particular cases, see~\cite{bach2013non}). This tends to slow down convergence to a minimum $\theta^\star \in \argmin_{\theta \in \R^d} f(\theta)$.
Gradually reducing the variance of $\grad f_i$ for $i \sim \cU[n]$ as an approximation of $\grad f$ allows to use larger -- even constant -- step sizes and to obtain faster convergence rates. This is the underlying idea of two recent methods - SAGA and SVRG respectively introduced in ~\cite{SAGA},~\cite{SVRG} - that use updates of the form
\begin{equation*}
  w^{t+1} \gets \theta^t - \gamma \Big( \nabla f_i (\theta^t) - \nabla f_i (\tilde \theta) + \frac{1}{n} \sum_{j=1}^n \nabla f_j (\tilde \theta) \Big),
\end{equation*}
and $\theta^{t+1} \gets \prox_{\gamma h}(w^{t+1})$. In~\cite{SVRG},
$\tilde \theta$ is fully updated after a certain number of iterations, called \emph{phases}, whereas in~\cite{SAGA}, $\tilde \theta$  is partially updated after each iteration.
Both methods use stochastic gradient descent steps, with variance reduction obtained via the centered control variable $-\nabla f_i (\tilde \theta) + \frac{1}{n} \sum_{j=1}^n \nabla f_j (\tilde \theta)$, and achieve linear convergence when $F$ is strongly-convex, namely $\E F(\theta^k) - \min_{ \in \R^d} F(x) =~O(\rho^k)$ with $\rho < 1$, which make these algorithms state-of-the-art for many convex optimization problems.
Some variants of SVRG~\cite{SVRG} also approximate the full gradient $\frac{1}{n} \sum_{j=1}^n \nabla f_j (\tilde \theta)$ using mini-batchs to decrease the computing time of each phase, see~\cite{lei2016less, harikandeh2015stopwasting}.

\paragraph{Numerically hard gradients.}

A very different, nevertheless classical, ``trick'' to reduce the complexity  of the gradient computation, is to express it, whenever the statistical problem allows it, as the expectation, with respect to a non-uniform distribution $\pi_\theta$, of a random variable $G(\theta)$, i.e.,  $\nabla f (\theta) = \E^{G(\theta) \sim \pi_\theta} [ G(\theta)]$.
Optimization problems with such a gradient have generated an extensive literature from the first works by~\cite{robbins1951}, and~\cite{kiefer1952}.
Some algorithms are designed to construct stochastic approximations of the sub-gradient of $f + h$, see~\cite{nemirovski, juditsky, Lan10, Duchi}.
Others are based on proximal operators to better exploit the smoothness of $f$ and the properties of $h$, see~\cite{hu_pan, lin_xiao, moulines}.
In this paper, we shall focus on the second kind of algorithms. Indeed, our approach is closer to the one developed in~\cite{moulines}, though, as opposed to ours, the algorithm developed in this latter work is based on proximal full gradient algorithms (not doubly stochastic as ours) and does not guarantee a linear convergence.

\paragraph{Contrastive divergence.}
The idea to approximate the gradient using MCMC already appeared in the litterature of Undirected Graphical Models under the name of Contrastive Divergence, see~\cite{Murphy:2012:MLP:2380985, hinton2002training, carreira2005contrastive}.
Indeeed, for this class of model, the gradient of the log-likelihood $\nabla f (\theta)$ can be written as the difference of two expectations: one - tractable - with respect to the data discrete distribution $\textbf{X}$, the other - intractable - with respect to the model-dependent distribution $p(\cdot,\theta)$.
The idea of Contrastive Divergence relies in the approximation of the intractable expectation using MCMC, with few iterations of the chain.
However, in the framework of Cox model, and also Conditional Random Fields (see Section \ref{sec:conclusion} below), this is the gradient $\grad f_i (\theta)$ that writes as an time-consuming expectation, see Equation~\ref{eq:grad_f_i}.

\paragraph{Our setting.}

The setting of our paper is original in the sense that it combines both previous settings, namely stochastic gradient descent and MCMC. As in the stochastic gradient setting, the gradient can be expressed as the sum of $n$ components, where
$n$ can be very large.
However, since these components are time-consuming to compute directly, following the expectation based gradient computation setting, they are expressed as averaged values of some random variables.
More precisely, the gradient $\grad f_i(\theta)$ is replaced by
an approximation $\widehat \nabla f_i (\theta)$ obtained by an MCMC algorithm.
Our algorithm is, to the best of our knowledge, the first one
to propose a combination of two stochastic approximations in this way, hence the name \emph{doubly stochastic}, which allow to deal with both, eventual large values for $n$ and the inner complexity of each gradient $\grad f_i$ computation.

The idea to mix SGD and MCMC has also been raised recently in the very different setting of \emph{implicit} stochastic gradient descent, see~\cite{implicit_sgd}.
Note also that in our approach we make two stochastic approximations to the gradient using random training points, while the doubly stochastic approach from~\cite{double_sto} performs two stochastic approximations to the gradient using random training points and random features for kernel methods.

\section{A doubly stochastic proximal gradient descent algorithm}
\label{sec:doubly_stochastic_proximal_gradient_descent}

Our algorithm 2SVRG is built upon the algorithm SVRG via an approximation function \textsc{ApproxMCMC}.
We first present the meta-algorithm without specifying the approximation function, and then provide two examples for \textsc{ApproxMCMC}.

\subsection{2SVRG: a meta-algorithm}

Following the ideas presented in the previous section, we design a
\emph{doubly stochastic proximal gradient descent algorithm} (2SVRG), by combining a variance reduction technique for SGD given by Prox-SVRG~\cite{SVRG}, and a Monte-Carlo Markov-Chain algorithm to obtain an approximation of the gradient $\nabla f_j (\theta)$ at each step.
Thus, in the considered setting the full gradient writes
\begin{equation*}
  \grad f(\theta) = \E^{i \sim \cU} [\grad f_i(\theta)] = \E^{i \sim \cU} \; \E^{G_i(\theta) \sim \pi_\theta^i} [G_i(\theta)],
\end{equation*}
where $\cU$ is the uniform distribution on $\{1, \ldots, n\}$, so our algorithm contains two levels of stochastic approximation: uniform sampling of $i$ (the variance-reduced SGD part) for the first expectation, and an approximation of the second expectation w.r.t $\pi_\theta^i$ by means of Monte-Carlo simulation. The 2SVRG algorithm is described in Algorithm~\ref{alg:s2vrg}.

\begin{algorithm}
  \caption{Doubly stochastic proximal gradient descent (2SVRG)}
  \label{alg:s2vrg}
  \small
  \begin{algorithmic}[1]
    \REQUIRE Number of phases $K \geq 1$, phase-length $m \geq 1$, step-size $\gamma > 0$, MCMC number of iterations per phase $(N_k)_{k=1}^K$, starting point $\theta^0 \in \R^d$
    \STATE \textbf{Initialize:} $\tilde{\theta} \gets \theta^0$ and compute $\grad f_i(\tilde \theta)$ for $i=1, \ldots, n$
    \FOR{$k=1$ {\bfseries to} $K$}
    \FOR{$t=0$ {\bfseries to} $m-1$}
    \STATE Pick $i \sim \mathcal{U}[n]$
    \STATE $\widehat \grad f_i (\theta^t) \gets \mbox{\textsc{ApproxMCMC}}(i, \theta^t, N_k)$
    \STATE $d^t = \widehat \grad f_i(\theta^{t}) - \grad f_i(\tilde \theta) + \frac1n \sum_{j=1}^n \grad f_{j} (\tilde \theta)$
    \STATE $\omega^{t+1} \gets \theta^{t} - \gamma d^t$
    \STATE $\theta^{t+1} \gets \prox_{\gamma h}(\omega^{t+1})$
    \ENDFOR
    \STATE Update $\tilde\theta \gets \frac{1}{m} \sum_{t=1}^m \theta^t$, $\theta^0 \gets \tilde \theta$, $\tilde \theta^k \gets \tilde \theta$
    \STATE Compute $\grad f_i(\tilde \theta)$ for $i=1, \ldots, n$
    \ENDFOR%
    \STATE \textbf{Return:} $\tilde \theta^K$%
  \end{algorithmic}
\end{algorithm}
Following Prox-SVRG by~\cite{SVRG},
this algorithm decomposes in \emph{phases}: iterations within a phase apply variance reduced stochastic gradient steps (with a backward proximal step, see lines~7 and~8 in Algorithm~\ref{alg:s2vrg}).
At the end of a phase, a full-gradient is computed (lines~10, 11) and used in the next phase for variance reduction.
Within a phase, each inner iteration samples uniformly at random an index~$i$ (line~4) and obtains an approximation of the gradient $\grad f_i$ at the previous iterate $\theta^t$ by applying $N_k$ iterations of a Monte-Carlo Markov-Chain (MCMC) algorithm.

Intuitively, the sequence $N_k$ should be increasing with the phase number $k$, as we need more and more precision as the iterations goes on (this is confirmed in Section~\ref{sec:theory}).
The important point of our algorithm resides precisely in this aspect: very noisy estimates can be used in the early phases of the algorithm, hence allowing for an overall low complexity as compared to a full gradient approach.

\subsection{ Choice of \textsc{ApproxMCMC} }

We focus now on two implementations of the function \textsc{ApproxMCMC} based on two famous MCMC algorithms: Metropolis-Hastings and Importance Sampling.

\subsubsection{ Independent Metropolis-Hastings }
When the $\pi_\theta^i$ are Gibbs probability measures, as for the previously described Cox partial log-likelihood (but for other models as well, such as Conditional Random Fields, see~\cite{crf1}), one can apply Independent Metropolis-Hastings (IMH), see Algorithm~\ref{IMH} below, to obtain approximations $\widehat \grad f_i$ of the gradients.
In this case the produced chain is geometrically uniformly ergodic, see~\cite{robert2004monte}, and therefore meets the general  assumptions required in our results (see Proposition~\ref{prop:IMH} below).
The IMH algorithm uses a proposal distribution $Q$ which is independent of the current state $j_l$ of the Markov chain.
\begin{algorithm}
\caption{\text{Independent Metropolis-Hastings (IMH) estimator (for the Cox model)}}
\label{IMH}
\begin{algorithmic}
\REQUIRE Proposal distribution $Q=\mathcal{U}\{R_i\}$, starting point $j_0 \in R_i$, stationary distribution $\pi=\pi_{\theta^t}^i$
\FOR{$l=0, \ldots, N_k-1$}%
\STATE $1.\textbf{ Generate: } j' \sim Q$.
\STATE $2.\textbf{ Update: } \alpha = \min \left( \frac{\pi (j') Q(j_l)}{\pi (j_l) Q (j')}, 1 \right) = \min \left( \exp ((x_{j'} - x_{j_l})^\top \theta^t), 1 \right)$.
\STATE $3.\textbf{ Take: }
j_{l+1} =
\begin{cases}
j' & \text{ with probability } \alpha \\
j_l & \text{ otherwise.}
\end{cases}
$
\ENDFOR%
\STATE \textbf{Return:} $-x_i + \frac{1}{N_k} \sum_{l=1}^{N_k} x_{j_l}$
\end{algorithmic}
\end{algorithm}

In the case of the Cox partial log-likelihood, at iteration $t$ of phase $k$ of Algorithm~\ref{alg:s2vrg}, we set $\pi = \pi_{\theta^t}^i$, and $Q$ to be the uniform distribution over the set $R_i$.
We implemented two versions of Algorithm~\ref{alg:s2vrg} with IMH: one with a uniform proposal $Q$, the other one with an adaptative proposal $\widetilde{Q}$.
When we want to approximate $\grad f_i (\theta)$, we can consider the adaptative proposal $\widetilde{Q}=\pi_{\tilde{\theta}}^i$, where $\tilde{\theta}$ is the iterate we have computed at the end of the previous phase, see Line 10 of Algorithm~\ref{alg:s2vrg}.
Since we compute the full gradient only once every phase, the probabilities $\pi_{\tilde{\theta}}^i(j)$ are computed at the same time, which means that the use of an adaptative proposal adds no computational effort.
Morever, the theoretical guarantees given in Section~\ref{sec:theory} make no difference between the two versions aformentionned, but a strong difference is observed in practice.

\subsubsection{ Importance Sampling }

To choice of the adaptative proposal above reduces the variance of the estimator given by \textsc{ApproxMCMC}.
The idea of sampling with $\widetilde{Q} = \pi_{\tilde{\theta}}^i$ can also be used in an Importance Sampling estimator as well.
\begin{align*}
\grad f_i( \theta ) &= \mathbb{E}^{G_i (\theta) \sim \pi_{\theta}^i} \left[ G_i (\theta) \right] = \mathbb{E}^{G_i (\theta) \sim \widetilde{Q}} \left[ G_i (\theta) \frac{\pi_{\theta}^i (G_i(\theta))}{\widetilde{Q} (G_i (\theta))} \right]
\end{align*}
Since the ratio $\pi_{\theta}^i (G_i(\theta)) / \widetilde{Q} (G_i (\theta))$ still contains an expensive term to compute, we can divide the term above with $\mathbb{E}_{\widetilde{Q}} \left[ \pi_{\theta}^i (G_i(\theta)) / \widetilde{Q} (G_i (\theta)) \right]=1$ and approximate the resulting term.
This trick provides an estimator called Normalized Importance Sampling estimator, which writes like this in the case of Cox partial likelihood:
\begin{align*}
  \widehat{J}_N &= \left. \sum_{k=1}^N (x_{j_k} - x_i) \frac{\pi_{\theta}^i (j_k)}{\widetilde{Q}(j_k)} \middle/ \sum_{k=1}^N \frac{\pi_{\theta}^i (j_k)}{\widetilde{Q}(j_k)} \right., &\mbox{ with } j_k \sim \widetilde{Q} \\
  &= -x_i + \sum_{k=1}^N \frac{\exp ( (\theta-\tilde{\theta})^\top x_{j_k} ) }{ \sum_{l=1}^N \exp ( (\theta-\tilde{\theta})^\top x_{j_l} )} x_{j_k}, &\mbox{ with } j_k \sim \widetilde{Q}
\end{align*}

\begin{algorithm}
\caption{\text{Normalized Importance Sampling (NIS) estimator of $\grad f_i (\theta)$ (for the Cox model)}}
\label{AIS}
\begin{algorithmic}
\REQUIRE Proposal distribution $\widetilde{Q} = \pi_{\tilde{\theta}}^i$, stationary distribution $\pi_\theta^i$, $V=0 \in \R^d, S=0 \in \R$
\FOR{$l=1, \ldots, N_k$}%
\STATE $1.\textbf{ Generate: } j_{l} \sim \widetilde{Q}(\cdot)$.
\STATE $2. \textbf{ Update: } V \gets V + \exp ((\theta - \tilde{\theta})^\top {x_{j_l}}) x_{j_l}$.
\STATE $3. \textbf{ Update: } S \gets S + \exp ((\theta - \tilde{\theta})^\top {x_{j_l}})$.
\ENDFOR%
\STATE \textbf{Return:} $-x_i + V / S$
\end{algorithmic}
\end{algorithm}

Section~\ref{sec:theory} below gives theoretical guarantees for Algorithm~\ref{alg:s2vrg}: linear convergence under strong-convexity of~$F$ is given in Theorem~\ref{thm:1}, and a convergence without strong convexity is given in Theorem~\ref{thm:2}.
This improves the proximal stochastic gradient method of~\cite{moulines}, where the best case rate is $O(1 / k^2)$ using Fista (see~\cite{FISTA}) acceleration scheme.
Numerical illustrations are given in Section~\ref{sec:expes}, where a fair comparison between several state-of-the-art algorithms is proposed.

\section{Theoretical guarantees}
\label{sec:theory}

\paragraph{Definitions.} All the functions $f_i$ and $h$ are proper convex lower-semicontinuous on $\R^d$. The norm $\|\cdot\|$ stands for the Euclidean norm on $\R^d$. A function $f : \R^d \rightarrow \R$ is $L$-smooth if it is differentiable and if its gradient is $L$-Lipschitz, namely if $\| \nabla f(x) - \nabla f(y) \| \le L \|x - y \|$ for all $x, y \in \R^d$. A function $f : \mathbb{R}^d \rightarrow \mathbb{R}$ is $\mu$-strongly convex if $f(x + y) \ge f(x) + \nabla f (x)^\top y + \frac{\mu}{2} \| y \|^2$ for all $x, y \in \R^d$ i.e. if $f - \frac{\mu}{2} \| \cdot \|^2$ is convex. The proximal operator of $h : \R^d \rightarrow \mathbb{R}$ is uniquely defined by $\mbox{prox}_h (x) = \argmin_{y \in \mathbb{R}^d} \{ h(y) + \frac{1}{2} \|x - y\|^2 \}$.

\paragraph{Notations.} We denote by $i_t$ the index randomly picked at the $t^{th}$ iteration, see line~4 in Algorithm~\ref{alg:s2vrg}. We introduce the error of the MCMC approximation $\eta^{t} = \widehat \grad f_{i_t} (\theta^{t-1}) - \nabla f_{i_t} (\theta^{t-1})$ and the filtration $\mathcal{F}_{t} = \sigma(\theta^0, i_1, \theta^1, \ldots, i_{t}, \theta^{t})$. In order to analyze the descent steps, we need different expectations: $\mathbb{E}_t$ the expectation w.r.t the distribution of the pair $(i_t, \widehat \grad f_{i_t} (\theta^{t-1}))$ conditioned on $\mathcal{F}_{t-1}$, and $\mathbb{E}$ the expectation w.r.t all the random iterates $(i_t, \theta^t)$ of the algorithm. We also denote $\theta^* = \argmin_{\theta \in \mathbb{R}^d} F(\theta)$.

\paragraph{Assumptions.}

\begin{assump}
\label{assump:1}
We consider $F = f + h$ where $f = \frac{1}{n} \sum_{i=1}^n f_i$, with each $f_i$ being convex and $L_i$-smooth, $L_i > 0$, and $h$ a lower semi-continuous and closed convex function. We denote $L = \max_{1 \le i \le n} L_i$. We assume that there exists $B > 0$ such that the iterates $\theta^t$ satisfy $\sup_{t \ge 0} \|\theta^t - \theta^*\| \le B$.
\end{assump}
\begin{assump}
\label{assump:2}
We assume that the bias and the expected squared error of the Monte Carlo estimation can be bounded in the following way:
\begin{align}
\label{eq:C1_C2}
\|\mathbb{E}_t \eta^t\| \le \frac{C_1}{N_k} \mbox{ and } \mathbb{E}_t \| \eta^{t} \|^2  \le \frac{C_2}{N_k}
\end{align}
for the iterations $t$ belonging to the $k$-th phase, where $N_k$ is the number of iterations of the Markov chain used for the computation of $\widehat \grad f_{i_t}(\theta^t)$ during phase $k$ (see line~5 of Algorithm~\ref{alg:s2vrg}), and where $C_1$ and $C_2$ are positive constants.
\end{assump}

Let us point out that Proposition~\ref{prop:IMH} below gives a sufficient condition for~\ref{assump:2} to hold.

\paragraph{Theorems.} The theorems below provide upper bounds on the distance to the minimum in the strongly convex case, see Theorem~\ref{thm:1} and in the convex case, see Theorem~\ref{thm:2}.

\begin{theorem}
\label{thm:1}
Suppose that $F = f + h$ is $\mu$-strongly convex. Consider
Algorithm~\ref{alg:s2vrg}, with a phase length $m$ and a step-size $\gamma \in (0, \frac{1}{16 L})$ satisfying
\begin{equation}
\rho = \frac{1}{m \gamma \mu (1 - 8 L \gamma)} + \frac{8 L \gamma (1 + 1 / m)}{1 - 8 L \gamma} < 1.
\end{equation}
Then, under~\ref{assump:1} and~\ref{assump:2}, we have\textup:
\begin{equation}
\mathbb{E} [F(\tilde{\theta}^{K})] - F(\theta^*) \le \rho^K \Big( F(\theta^{0}) - F(\theta^*) + \sum_{l = 1}^{K} \frac{D}{\rho^l N_l} \Big),
\end{equation}
where $D = \frac{3 \gamma C_2 + B C_1}{1 - 8 L \gamma}$.
\end{theorem}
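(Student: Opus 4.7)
The plan is to adapt the Prox-SVRG analysis of~\cite{SVRG} to the doubly-stochastic setting by carrying the MCMC error $\eta^t$ as an additive perturbation through every step of the argument. The key decomposition is $d^t = v_t + \eta^t$ with $v_t = \grad f_{i_t}(\theta^t) - \grad f_{i_t}(\tilde\theta) + \grad f(\tilde\theta)$, which is the noise-free SVRG direction and satisfies $\mathbb{E}_t[v_t] = \grad f(\theta^t)$. Combining the proximal update $\theta^{t+1} = \prox_{\gamma h}(\theta^t - \gamma d^t)$ with $L$-smoothness of $f$ and convexity of $h$ (exactly as in Lemma~3 of~\cite{SVRG}) yields, for every $\theta \in \R^d$,
\begin{equation*}
F(\theta^{t+1}) \le F(\theta) + \frac{1}{2\gamma}\bigl[\|\theta^t - \theta\|^2 - \|\theta^{t+1} - \theta\|^2\bigr] + \langle \grad f(\theta^t) - d^t, \theta^{t+1} - \theta\rangle + \frac{\gamma}{2(1-L\gamma)}\|d^t - \grad f(\theta^t)\|^2.
\end{equation*}

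Second, I would take conditional expectation $\mathbb{E}_t[\,\cdot\,]$ at $\theta = \theta^*$ and isolate the $\eta^t$ contribution. For the linear term, the standard pivot trick (comparing $\theta^{t+1}$ to $\bar{\theta}^{t+1} = \prox_{\gamma h}(\theta^t - \gamma \grad f(\theta^t))$) cancels the $v_t$ part up to routine remainders, leaving a bias term $\langle \mathbb{E}_t \eta^t, \theta^{t+1} - \theta^*\rangle$; Cauchy--Schwarz combined with $\|\theta^{t+1} - \theta^*\| \le B$ from~\ref{assump:1} and the bias bound of~\ref{assump:2} controls this by $B C_1/N_k$. For the quadratic term, the split $\|d^t - \grad f(\theta^t)\|^2 \le 2\|v_t - \grad f(\theta^t)\|^2 + 2\|\eta^t\|^2$ lets me invoke the classical variance lemma $\mathbb{E}_t\|v_t - \grad f(\theta^t)\|^2 \le 4L[F(\theta^t) - F(\theta^*) + F(\tilde\theta) - F(\theta^*)]$ on the first summand and the variance bound of~\ref{assump:2} on the second, yielding $2C_2/N_k$. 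The doubling in the first summand is precisely what upgrades the $4L\gamma$ factor of the noiseless analysis to the $8L\gamma$ factor appearing in $\rho$.

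Third, summing the resulting inequality over $t = 0, \ldots, m-1$ within phase $k$ telescopes the $\|\theta^t - \theta^*\|^2$ differences. Jensen's inequality applied to $\tilde\theta^k = \frac{1}{m}\sum_{t=1}^m \theta^t$ lower-bounds the left-hand side by $2\gamma m (1-8L\gamma)\,\mathbb{E}[F(\tilde\theta^k) - F(\theta^*)]$ after moving the variance remainders to the left, while $\mu$-strong convexity converts $\|\tilde\theta^{k-1} - \theta^*\|^2$ into $\tfrac{2}{\mu}[F(\tilde\theta^{k-1}) - F(\theta^*)]$. Collecting the $\eta^t$-dependent contributions (a $\gamma C_2$ from the split quadratic term, an additional $2\gamma C_2$ inherited after combining with the $v_t$ variance, and the bias term $BC_1$) produces the per-phase recurrence
\begin{equation*}
\mathbb{E}[F(\tilde\theta^k) - F(\theta^*)] \le \rho\,\mathbb{E}[F(\tilde\theta^{k-1}) - F(\theta^*)] + \frac{D}{N_k},
\end{equation*}
with $D = (3\gamma C_2 + B C_1)/(1-8L\gamma)$, exactly as claimed.

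Finally, iterating this linear recurrence from $k=1$ to $K$ gives $\mathbb{E}[F(\tilde\theta^K) - F(\theta^*)] \le \rho^K[F(\theta^0) - F(\theta^*)] + \sum_{l=1}^K \rho^{K-l} D/N_l$, which factors into $\rho^K\bigl[F(\theta^0) - F(\theta^*) + \sum_{l=1}^K D/(\rho^l N_l)\bigr]$, the claimed bound. The hardest part is the bookkeeping in paragraph two: every time $\eta^t$ is paired with an iterate-dependent quantity (through an inner product or a cross term inside a squared norm) one must arrange the expansion so that only $\|\mathbb{E}_t \eta^t\|$ and $\mathbb{E}_t\|\eta^t\|^2$ appear, thereby exploiting the $1/N_k$ decay of~\ref{assump:2}, and the $B$-boundedness from~\ref{assump:1} must be applied exactly once so that the extracted constant is the tight $3\gamma C_2 + B C_1$ rather than anything larger.
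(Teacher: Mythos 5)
Your plan follows essentially the same route as the paper's proof: the one-step prox-gradient inequality (the paper's Lemma~\ref{grad_map}), the pivot point $\nu^t=\prox_{\gamma h}[\theta^{t-1}-\gamma\nabla f(\theta^{t-1})]$ to isolate the MCMC bias, a variance lemma with the doubled $8L$ constant (Lemma~\ref{bound_var}) absorbing $\mathbb{E}_t\|\eta^t\|^2$, and the telescoping/Jensen/strong-convexity step yielding the per-phase recurrence $A^k\le\rho A^{k-1}+D/N_k$ that is then iterated. The one imprecision is that the bias term must be paired with the $\mathcal{F}_{t-1}$-measurable pivot (so that $\|\mathbb{E}_t\eta^t\|\le C_1/N_k$ and $\|\nu^t-\theta^*\|\le B$ via Lemma~\ref{prox_x_star} apply), not with $\theta^{t+1}$ itself as you write it--but you flag exactly this bookkeeping issue yourself, and the paper resolves it the way you indicate.
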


In Theorem~\ref{thm:1}, the choice $N_k = k^\alpha \rho^{-k}$ with $\alpha > 1$ gives
\begin{align*}
\mathbb{E} [F(\tilde{\theta}^{K})] - F(\theta^*) \le D' \rho^K
\end{align*}
where $D' = F(\theta^{0}) - F(\theta^*) + D \sum_{k \geq 1} k^{-\alpha}$ and $D > 0$ is a numerical constant. This entails that 2SVRG achieves a \emph{linear rate} under strong convexity.

\begin{remark}
  [An important remark] The number $N_k$ of MCMC iterations is growing quickly with the phase number $k$. So, we use in practice an hybrid version of 2SVRG called \emph{HSVRG}: 2SVRG is used for the first phases (usally 4 or 5~phases in our experiments), and as soon as $N_k$ exceeds $n$, we switch to a mini-batch version of Prox-SVRG (SVRG-MB), see~\cite{SVRG_MB}.
  A precise description of HSVRG is given in Algorithm~\ref{alg:hybrid} from Section~\ref{sec:expes} below.
  Note that overall linear convergence of HSVRG is still guaranteed, since both 2SVRG and SVRG-MB decrease linearly the objective from one phase to the other.
\end{remark}

\begin{theorem}
\label{thm:2}
Consider
Algorithm~\ref{alg:s2vrg}, with a phase length $m$ and a step-size $\gamma \in (0, \frac{1}{8 L (2m+1)})$. Then, under~\ref{assump:1} and~\ref{assump:2}, we have\textup:
\begin{equation}
\mathbb{E} [F(\bar{\theta}^K)] - F(\theta^*) \le \frac{D_1}{K} + \frac{D_2}{K} \sum_{k=1}^{K+1} \frac{1}{N_k},
\end{equation}
where $D_1$ and $D_2$ depend on the constants of the problem, and where $\bar{\theta}^K$ is the average of iterates $\tilde \theta^k$ until phase $K$.
\end{theorem}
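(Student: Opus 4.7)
The plan is to mirror the strongly convex argument of Theorem~\ref{thm:1} but, absent a contractive factor, to sum a per-iteration descent inequality across every inner iteration of every phase and apply Jensen's inequality to the averaged iterate. A convenient observation is that $\bar\theta^K = \frac{1}{K}\sum_{k=1}^K \tilde\theta^k$ is, by the definition of $\tilde\theta^k$ (line~10 of Algorithm~\ref{alg:s2vrg}), the average of all $Km$ inner iterates $\theta^{k,t}$, which is the natural target for such an averaging argument.

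\textbf{Step 1 (per-iteration descent).} Decompose $d^t = g^t + \eta^t$, where $g^t = \nabla f_{i_t}(\theta^{t-1}) - \nabla f_{i_t}(\tilde\theta) + \nabla f(\tilde\theta)$ is the exact Prox-SVRG direction (so $\mathbb{E}_t g^t = \nabla f(\theta^{t-1})$) and $\eta^t$ is the MCMC error. Apply the standard three-point identity for the proximal step $\theta^t = \prox_{\gamma h}(\theta^{t-1} - \gamma d^t)$, combined with convexity of $f$ and $L$-smoothness, to get an inequality of the form
$$2\gamma[F(\theta^t) - F(\theta^*)] + \|\theta^t - \theta^*\|^2 \le \|\theta^{t-1} - \theta^*\|^2 + 2\gamma\langle \nabla f(\theta^{t-1}) - d^t,\, \theta^t - \theta^*\rangle + \gamma^2\|g^t\|^2 + \text{(cross terms in $\eta^t$)}.$$
Take $\mathbb{E}_t$: the mean-zero direction $g^t - \nabla f(\theta^{t-1})$ is handled by the Prox-SVRG variance bound $\mathbb{E}_t\|g^t - \nabla f(\theta^{t-1})\|^2 \le 4L[F(\theta^{t-1}) - F(\theta^*) + F(\tilde\theta) - F(\theta^*)]$; the bias $\langle\mathbb{E}_t\eta^t,\,\theta^* - \theta^t\rangle$ is controlled by Cauchy--Schwarz, the diameter $B$ from \ref{assump:1}, and the $C_1/N_k$ bound of \ref{assump:2}; and $\mathbb{E}_t\|\eta^t\|^2 \le C_2/N_k$ is direct from \ref{assump:2}. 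The upshot is
$$2\gamma\mathbb{E}_t[F(\theta^t) - F(\theta^*)] + \mathbb{E}_t\|\theta^t - \theta^*\|^2 \le \|\theta^{t-1} - \theta^*\|^2 + 8L\gamma^2[F(\theta^{t-1}) - F(\theta^*)] + 8L\gamma^2[F(\tilde\theta) - F(\theta^*)] + \frac{E}{N_k},$$
for some explicit constant $E = E(B,C_1,C_2,\gamma)$.

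\textbf{Step 2 (double summation and averaging).} Take full expectation and sum over $t=1,\ldots,m$ and $k=1,\ldots,K$. The quadratic terms telescope since $\theta^{k,0} = \tilde\theta^{k-1}$, using Jensen $\|\tilde\theta^k-\theta^*\|^2 \le \tfrac{1}{m}\sum_t\|\theta^{k,t}-\theta^*\|^2$ wherever needed. The feedback $\sum_{k,t}8L\gamma^2[F(\tilde\theta^{k-1})-F(\theta^*)] = 8L\gamma^2 m \sum_k [F(\tilde\theta^{k-1})-F(\theta^*)]$ is folded into the double iterate sum by applying convexity of $F$ to $\tilde\theta^{k-1}=\tfrac{1}{m}\sum_t\theta^{k-1,t}$, trading the factor $m$ over phases for a sum over all $Km$ inner iterations. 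After collecting the $F(\theta^{k,t}) - F(\theta^*)$ contributions, the remaining coefficient is strictly positive precisely because $\gamma < 1/[8L(2m+1)]$. Dividing by this positive coefficient and applying convexity of $F$ to $\bar\theta^K$ yields the claimed
$$\mathbb{E}[F(\bar\theta^K)] - F(\theta^*) \le \frac{D_1}{K} + \frac{D_2}{K}\sum_{k=1}^{K+1}\frac{1}{N_k},$$
where $D_1$ absorbs $\|\theta^0 - \theta^*\|^2$ and $F(\theta^0) - F(\theta^*)$, $D_2$ absorbs $E$, and the upper index $K+1$ is a bookkeeping artefact of the feedback term from the last phase.

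\textbf{Main obstacle.} The delicate point is the phase-coupling in Step~2: the feedback $F(\tilde\theta^{k-1})$ on the right of the per-iteration bound links successive phases, and absorbing it via convexity costs a factor of $m$, which is exactly what forces the stricter step-size condition $\gamma < 1/[8L(2m+1)]$ (as opposed to $\gamma < 1/(16L)$ in Theorem~\ref{thm:1}, where strong convexity furnishes a geometric contraction that can absorb the feedback directly). The second subtlety is the biased MCMC term $\|\mathbb{E}_t\eta^t\|$, which cannot be tamed by a variance bound alone; this is precisely where the iterate-boundedness constant $B$ from Assumption~\ref{assump:1} enters and makes the bias contribution summable at rate $1/N_k$.
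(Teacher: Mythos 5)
Your high-level plan (per-iteration descent, double summation, Jensen on the averaged iterate) is the right family of arguments, and your Step 1 matches the paper's starting point: its inequality \eqref{diff_norm} specialized to $\mu_f=\mu_h=0$, combined with Lemma~\ref{bound_var} and the bias/variance bounds of \ref{assump:2}. The gap is in Step 2, in the sentence ``the quadratic terms telescope since $\theta^{k,0}=\tilde\theta^{k-1}$, using Jensen \ldots wherever needed.'' Summing your per-iteration inequality over $t=1,\dots,m$ inside phase $k$ telescopes the quadratics to $\|\tilde\theta^{k-1}-\theta^*\|^2-\|\theta^{k,m}-\theta^*\|^2$; summing over $k$ then requires cancelling $\|\tilde\theta^{k}-\theta^*\|^2$ (the starting point of phase $k+1$) against $\|\theta^{k,m}-\theta^*\|^2$ (the last iterate of phase $k$). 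Since $\tilde\theta^k$ is the \emph{average} $\frac1m\sum_t\theta^{k,t}$ and not the last iterate, there is no such cancellation: Jensen gives $\|\tilde\theta^k-\theta^*\|^2\le\frac1m\sum_{t=1}^m\|\theta^{k,t}-\theta^*\|^2$, but that full within-phase sum has already been consumed by the within-phase telescoping, so the negative term you would need is not available. Bounding the orphaned $\|\tilde\theta^k-\theta^*\|^2$ terms by $B^2$ instead would leave a non-vanishing constant after dividing by $K$ and destroy the $O(1/K)$ rate.

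The paper closes exactly this hole with a nested-summation device: it first keeps $m\|\tilde\theta^k-\theta^*\|^2$ on the left via Jensen (its step \eqref{eq:thm2}), then bounds each remaining $\|\theta^{t}-\theta^*\|^2$ on the right by $\|\theta^{0}-\theta^*\|^2$ plus the accumulated error terms $\sum_{l\le t}(\theta^l-\theta^*)^\top\Delta^l$ (its step \eqref{eq:maj_thm2}). This makes the quadratics telescope phase-to-phase in terms of the $\tilde\theta^k$'s, at the price of weights $m+1-t\le m$ on the error terms; that factor $m$ is what produces the coefficient $1-8L\gamma m$ on the left and $8L\gamma(m+1)$ on the right, hence the condition $8L\gamma(2m+1)<1$. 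Your ``main obstacle'' paragraph misattributes this: applying convexity of $F$ to write $mF(\tilde\theta^{k-1})\le\sum_tF(\theta^{k-1,t})$ costs no factor of $m$ (the $m$ phase-level copies become exactly $m$ iterate-level terms), and under your own accounting the leading coefficient would be $2\gamma(1-8L\gamma)$, requiring only $\gamma<1/(8L)$ --- a sign that the $(2m+1)$ in the hypothesis is doing work your argument never actually performs. To repair the proof you need either the paper's substitution step or an equivalent mechanism that expresses the cross-phase telescoping entirely in terms of the averaged iterates.
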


In Theorem~\ref{thm:2}, the choice $N_k = k^\alpha$ with $\alpha > 1$ gives
\begin{equation*}
\mathbb{E} [F(\bar{\theta}^K)] - F(\theta^*) \le \frac{D_3}{K}
\end{equation*}
for a constant $D_3 > 0$. This result is an improvement of the Stochastic Proximal Gradient algorithm from~\cite{moulines} since it is not necessary to design a weighted averaged but just a simple average to reach the same convergence rate. Also, it provides a convergence guarantee for the non-strongly convex case, which is not proposed in~\cite{SVRG}.

Theorems~\ref{thm:1} and~\ref{thm:2} show a trade-off between the linear convergence of the variance-reduced stochastic gradient algorithm and the MCMC approximation error. The next proposition proves that Algorithm~\ref{IMH} satisfies~\ref{assump:2} under a general assumption on the proposal and the stationary distribution.

\begin{proposition}
\label{prop:IMH}
Suppose that there exists $M > 0$ such that the proposal $Q$ and the stationary distribution $\pi$ satisfy $\pi (x) \le M Q(x)$, for all $x$ in the support of $\pi$. Then, the error $\eta^t$ obtained by Algorithm~\ref{IMH} satisfies~\ref{assump:2}.
\end{proposition}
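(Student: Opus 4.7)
The plan is to reduce the claim to the classical uniform ergodicity of Independent Metropolis--Hastings under a bounded density ratio, and then to control the bias and squared error of the ergodic average $-x_i + \frac{1}{N_k}\sum_{l=1}^{N_k} x_{j_l}$ in the standard way. The minorization hypothesis $\pi \le M Q$ is exactly the condition of Mengersen--Tweedie, which yields uniform geometric ergodicity of Algorithm~\ref{IMH}: for every starting state $j_0$ and every $l \ge 1$,
\begin{equation*}
\|P^l(j_0,\cdot) - \pi\|_{\mathrm{TV}} \le \left(1 - \tfrac{1}{M}\right)^l.
\end{equation*}
Since the chain takes values in the finite set $R_i$ and $\{x_j\}$ is finite, the quantity $B_x = \max_{j} \|x_j\|$ is finite; write $\mu = \mathbb{E}_\pi[x_J]$ and $Y_l = x_{j_l} - \mu$, so that $\eta^t = \frac{1}{N_k}\sum_{l=1}^{N_k} Y_l$ and $\|Y_l\| \le 2B_x$.

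For the bias, I would use the triangle inequality and the identification of $\|\mathbb{E}[g(j_l)] - \mathbb{E}_\pi[g]\|$ with an integral against $P^l(j_0,\cdot) - \pi$ to get
\begin{equation*}
\|\mathbb{E}_t \eta^t\| \le \frac{1}{N_k}\sum_{l=1}^{N_k} 2 B_x \|P^l(j_0,\cdot)-\pi\|_{\mathrm{TV}} \le \frac{2 B_x}{N_k} \sum_{l=1}^{N_k}\left(1-\tfrac{1}{M}\right)^l \le \frac{2 B_x M}{N_k},
\end{equation*}
which gives the first bound with $C_1 = 2 B_x M$. Note that this bound is uniform in the starting point $j_0$ and in the sampled index $i_t$, which is what is needed for the conditional expectation $\mathbb{E}_t$.

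For the expected squared error, I would expand
\begin{equation*}
\mathbb{E}_t \|\eta^t\|^2 = \frac{1}{N_k^2}\sum_{l,l'=1}^{N_k} \mathbb{E}_t[Y_l \cdot Y_{l'}],
\end{equation*}
and split into the diagonal ($O(1/N_k)$) and the off-diagonal. For $l < l'$, I would condition on $j_l$ and write $\mathbb{E}_t[Y_{l'} \mid j_l] = \int x_j P^{l'-l}(j_l,dj) - \mu$, which has norm at most $2 B_x (1-1/M)^{l'-l}$ by the ergodicity bound. Using $\|Y_l\| \le 2B_x$ and taking absolute values, $|\mathbb{E}_t[Y_l \cdot Y_{l'}]| \le 4 B_x^2 (1-1/M)^{|l'-l|}$. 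Summing the geometric series over $l'$ and then over $l$ yields $\sum_{l,l'} |\mathbb{E}_t[Y_l \cdot Y_{l'}]| \le 4 B_x^2 N_k \cdot \frac{2}{1-(1-1/M)} = 8 B_x^2 M N_k$, so that $\mathbb{E}_t \|\eta^t\|^2 \le \frac{8 B_x^2 M}{N_k}$, which is the second bound with $C_2 = 8 B_x^2 M$. The main technical point is this covariance estimate: once the mixing bound is phrased as a conditional-expectation inequality it is routine, but one has to be careful that the bound holds uniformly in the (random) starting state $j_0$, which is exactly the content of uniform (not merely geometric) ergodicity, and is precisely why the $\pi \le M Q$ assumption is needed rather than a pointwise bound on a single state.
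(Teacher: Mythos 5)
Your proof is correct, and the first half follows the same route as the paper: the hypothesis $\pi \le M Q$ gives uniform geometric ergodicity of the IMH chain (the paper cites Theorem~7.8 of \cite{robert2004monte}, which is the Mengersen--Tweedie bound you invoke, up to a factor $2$ in the total-variation convention), and the bias bound is obtained exactly as you do, by integrating the TV decay and summing the geometric series --- this is the ``simple computation'' the paper alludes to without writing out. Where you genuinely diverge is the second-moment bound. The paper handles $\mathbb{E}_t\|\eta^t\|^2$ by citing a Rosenthal-type moment inequality for geometrically ergodic Markov chains (Proposition~12 of \cite{fort2003}), which is why its constant $C_2$ involves the Rosenthal constant $\mathcal{C}_2$; you instead expand $\|\eta^t\|^2$ into a double sum, bound each off-diagonal term $\mathbb{E}_t[Y_l\cdot Y_{l'}]$ by conditioning on $j_l$ and applying the uniform mixing bound to $\mathbb{E}_t[Y_{l'}\mid j_l]$, and resum the geometric series. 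Your route is more elementary and self-contained, and it produces fully explicit constants $C_1 = 2B_xM$, $C_2 = 8B_x^2M$; the price is that it only delivers the second moment, whereas the Rosenthal inequality would give moments of any order $p\ge 2$ with the same $1/N_k$ rate. Your covariance estimate is the right technical crux, and your remark that the bound must hold uniformly in the (random, $\mathcal{F}_{t-1}$-measurable) starting state $j_0$ --- which is exactly what uniform ergodicity provides --- is the correct justification for why the conditional expectations $\mathbb{E}_t$ can be controlled. The only point left implicit in both your argument and the paper's is that $C_1$ and $C_2$ must be uniform over the sampled index $i_t$ and the iterate $\theta^{t-1}$, i.e.\ the constant $M$ in the hypothesis must be taken uniform over the family of pairs $(\pi^i_{\theta},Q)$ actually used by the algorithm.
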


\begin{remark}
[Specifics for the Cox partial likelihood]
Note that the assumptions required in Proposition~\ref{prop:IMH} are met for the Cox partial likelihood: in this case, a simple choice is $M = n \max_{x \in \text{supp}(\pi)} \pi (x)$, and the Monte Carlo error $\eta^t$ induced by computing the gradient of $f_i$ at phase $k$ using Algorithm~\ref{IMH} satisfies~\eqref{eq:C1_C2} with
\begin{align*}
  C_1 &= \frac{2}{|R_i|} \max_{j \in R_i} \pi_{\theta^{t-1}}^i (j) \\
  C_2 &= 36 \mathcal{C}_2 C_1^2 (1 + C_1) \max_{j \in R_i} \| x_j \|_2^2,
\end{align*}
where $\mathcal{C}_2$ is the Rosenthal constant of order 2, see~Proposition~12 in~\cite{fort2003}.

\end{remark}

\section{Numerical experiments} 
\label{sec:expes}

We compare several solvers for the minimization of the objective given by an elastic-net penalization of the Cox partial likelihood
\begin{equation*}
  F(\theta) = -\ell(\theta) + \lambda \Big(\alpha \| \theta \|_1 + \frac{1-\alpha}{2} \| \theta \|_2^2 \Big),
\end{equation*}
where we recall that the partial likelihood $\ell$ is defined in Equation~\eqref{eq:cox-partial} and where $\lambda > 0$ and $\alpha \in [0, 1]$ are tuning parameters.

\paragraph{A fair comparison of algorithms.} 

The doubly stochastic nature of the considered algorithms makes it hard to compare them to batch algorithms in terms of iteration number or epoch number (number of full passes over the data), as this is usually done for SGD-based algorithm.
Hence, we proceed by plotting the evolution of $F(\tilde \theta) - F(\theta^*)$ (where $\theta^* \in \argmin_{u \in \R^d} F(u)$ and $\tilde \theta$ is the current iterate of a solver) as a function of the number of inner products between a feature vector $x_i$ and $\theta$, effectively computed by each algorithm, to obtain the current iterate $\tilde \theta$.
This gives a fair way of comparing the effective complexity of all algorithms.

\paragraph{About the baselines specific to the Cox model.}

State-of-the-art algorithms to fit the elastic-net penalized Cox partial likelihood are \texttt{cocktail} by~\cite{cocktail} and \texttt{coxnet}, by~\cite{coxnet}. Both algorithms are combining the ideas of coordinate descent and majoration-minimization. Full convergence results for these algorithms have not yet been established, although \texttt{Cocktail} has a coordinate-wise descent property.

These algorithms however  need a good starting point (near the actual minimizer)  to achieve convergence (this fact is due to a diagonal approximation of the Hessian matrix, see \cite{GAD}, Chapter 8.). They are therefore tuned to provide good path of solutions while varying by small steps the penalization parameter $\lambda$. Indeed in this case, this starting point is naturally set at the minimizer at the previous value of $\lambda$, when minimizing along a path but cannot be guessed outside of a path. We illustrate this fact on Figure~\ref{fig:cocktail_no_conv}, where the convergence of \texttt{Cocktail}  and L-BFGS-B algorithms are compared for two starting points $\theta_0$. 

\begin{figure}
  \centering
  \includegraphics[width=0.48\textwidth]{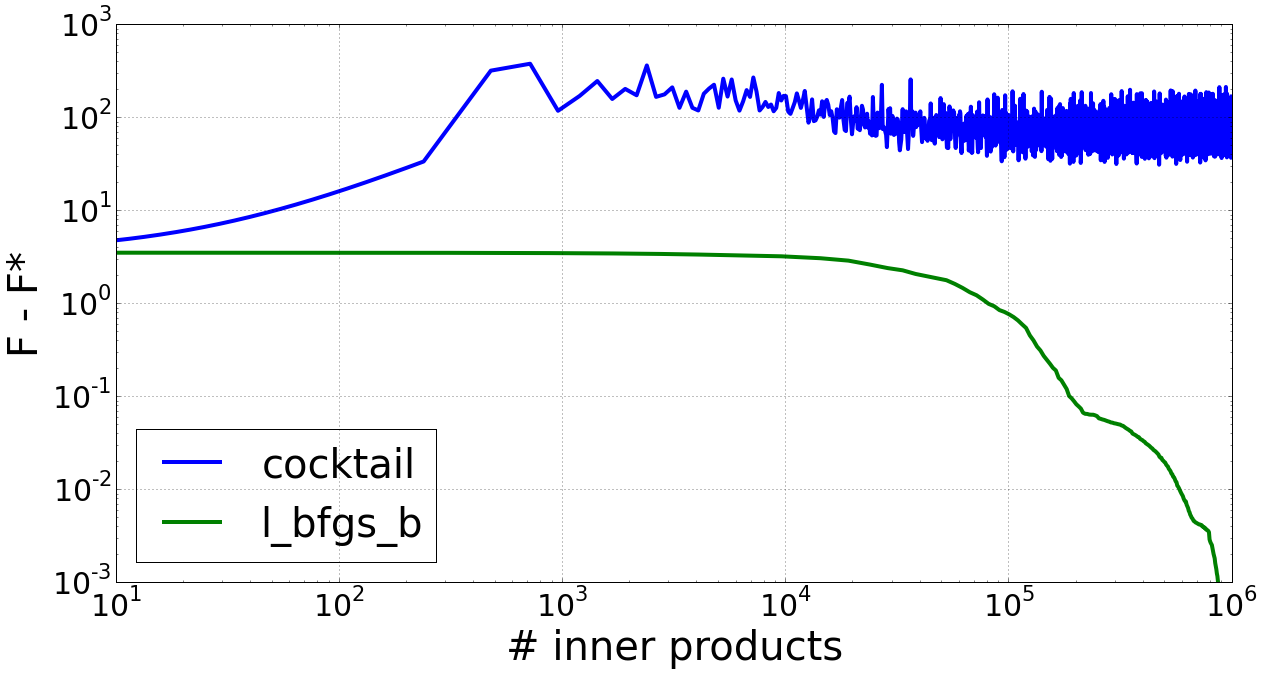}
  \includegraphics[width=0.48\textwidth]{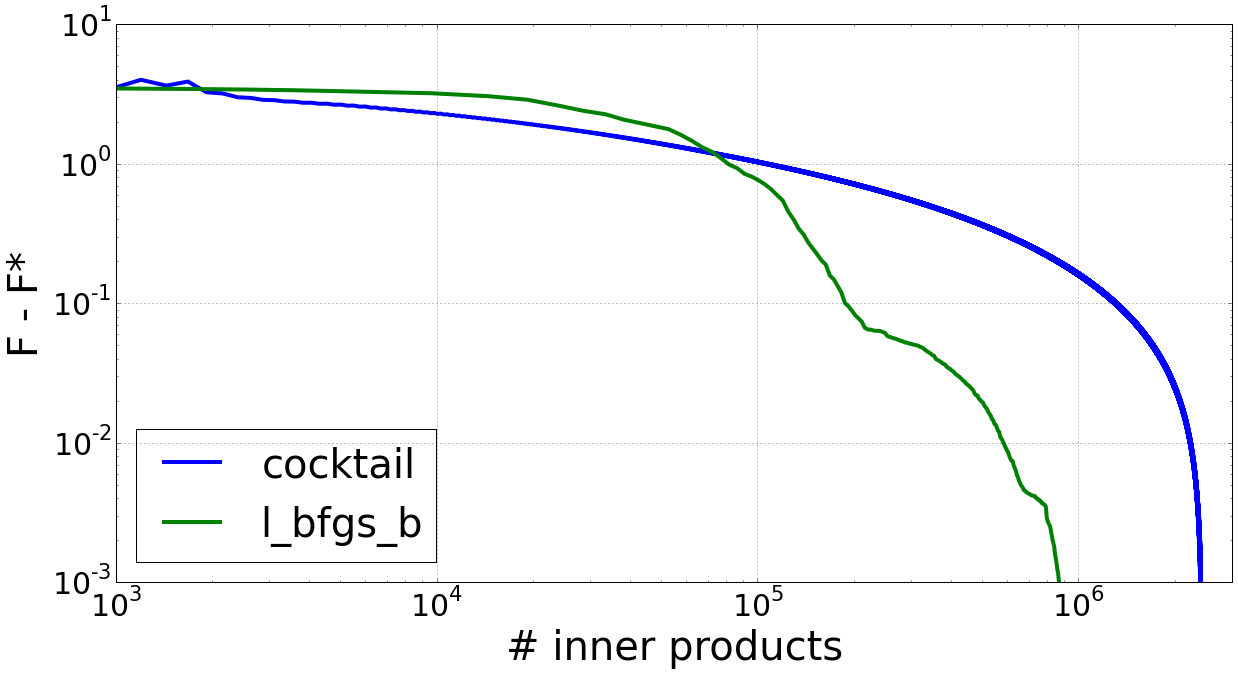}
  \caption{Convergence of Cocktail and L-BFGS-B on Lymphoma dataset. \emph{Top:} the starting point is $\theta^0 = \textbf{0} \in \mathbb{R}^d$. \emph{Bottom:} the starting point is $\theta^0 = \widehat{\theta}^{(l)}$ (solution to the same objective with a slightly larger $\lambda$). This illustrates the fact that Cocktail cannot minimize directly a single objective (with a fixed $\lambda$) and requires to compute the full path of solution to converge.}
  \label{fig:cocktail_no_conv}
\end{figure}

Even when the starting point is set to the previous minimizer (second case in Figure~\ref{fig:cocktail_no_conv}, \texttt{cocktail}'s convergence is slower than the one of L-BFGS-B. As a consequence, we decided that no fair comparison could be conducted with \texttt{cocktail} and \texttt{coxnet} algorithms.

\paragraph{Hybrid SVRG algorithm}
\label{hybrid_algo}

Since $N_k$ exponentially increases, the 2SVRG's complexity is higher than SVRG's original complexity.
However, the algorithm 2SVRG is very efficient during the first phases: we introduce an hybrid solver that begins with 2SVRG and switches to SVRG with mini-batchs (denoted SVRG-MB).
Mini-batching simply consists in replacing single stochastic gradients $\grad f_i$ by an average over a subset $\mathcal B$ of size $n_{\text{mb}}$ uniformly selected at random.
This is useful in our case, since we can use a computational trick (recurrence formula) to compute mini-batched gradients. 
In our experiments, we used $n_\text{mb} = 0.1 n$ or $n_\text{mb} = 0.01 n$, a constant step-size $\gamma$ designed for each dataset, and switched from 2SVRG to SVRG-MB after $K_S=5$ phases. We set $N_k = n^{k / (K_S + 2)}$ so that $N_k$ never exceeds $n$.

\begin{algorithm}
  \caption{Hybrid SVRG (HSVRG)}
  \label{alg:hybrid}
  \small
  \begin{algorithmic}[1]
    \REQUIRE Number of phases before switching $K_S \geq 1$, total number of phases $K \geq K_S$, phase-length $m \geq 1$, step-size $\gamma > 0$, MCMC number of iterations per phase $(N_k)_{k=1}^K$, starting point $\theta^0 \in \R^d$
    \STATE \textbf{Initialize:} $\tilde{\theta} \gets \theta^0$ and compute $\grad f_i(\tilde \theta)$ for $i=1, \ldots, n$
    \FOR{$k=1$ {\bfseries to} $K_S$}
    \FOR{$t=0$ {\bfseries to} $m-1$}
    \STATE Pick $i \sim \mathcal{U}[n]$
    \STATE $\widehat \grad f_i (\theta^t) \gets \mbox{\textsc{ApproxMCMC}}(i, \theta^t, N_k)$
    \STATE $d^t = \widehat \grad f_i(\theta^{t}) - \grad f_i(\tilde \theta) + \frac1n \sum_{j=1}^n \grad f_{j} (\tilde \theta)$
    \STATE $\omega^{t+1} \gets \theta^{t} - \gamma d^t$
    \STATE $\theta^{t+1} \gets \prox_{\gamma h}(\omega^{t+1})$
    \ENDFOR
    \STATE Update $\tilde\theta \gets \frac{1}{m} \sum_{t=1}^m \theta^t$, $\theta^0 \gets \tilde \theta$, $\tilde \theta^k \gets \tilde \theta$
    \STATE Compute $\grad f_i(\tilde \theta)$ for $i=1, \ldots, n$
    \ENDFOR%
    \FOR{$k=K_S+1$ {\bfseries to} $K$}
    \FOR{$t=0$ {\bfseries to} $m_\text{mb}-1=\floor{(m-1) / n_\text{mb}}$}
    \STATE Pick a set of random indices $\mathcal{B} \sim (\mathcal{U}[n])^{n_\text{mb}}$
    \STATE $d^t = \grad f_\mathcal{B}(\theta^{t}) - \grad f_\mathcal{B}(\tilde \theta) + \frac1n \sum_{j=1}^n \grad f_{j} (\tilde \theta)$
    \STATE $\omega^{t+1} \gets \theta^{t} - \gamma d^t$
    \STATE $\theta^{t+1} \gets \prox_{\gamma h}(\omega^{t+1})$
    \ENDFOR
    \STATE Update $\tilde\theta \gets \frac{1}{m_\text{mb}} \sum_{t=1}^{m_\text{mb}} \theta^t$, $\theta^0 \gets \tilde \theta$, $\tilde \theta^k \gets \tilde \theta$
    \ENDFOR%
    \STATE \textbf{Return:} $\tilde \theta^K$%
  \end{algorithmic}
\end{algorithm}


\paragraph{Baselines.}

We describe in this paragraph the algorithm that we put in competition in our experiments.

\begin{description}
  \item[FISTA] This is accelerated proximal gradient from~\cite{FISTA} with backtracking linesearch. Inner products necessary inside the backtracking are counted as well.

  \item[L-BFGS-B] A state-of-the-art quasi-Newton solver which provides a usually strong baseline for many batch optimization algorithms, see~\cite{LBFGS}.
  We use the original implementation of the algorithm proposed in \texttt{python}'s \texttt{scipy.optimize} module. Non-differentiability of the $\ell_1$-norm in the elastic-net penalization is dealt with the standard trick of reformulating the problem, using the fact that $|a| = a_+ + a_-$ for $a \in \R$.

  \item[HSVRG-UNIF-IMH] This is Algorithm~\ref{alg:hybrid} where \mbox{\textsc{ApproxMCMC}} is done via Algorithm~\ref{IMH} with uniform~proposal~$Q$.

  \item[HSVRG-ADAP-IMH] This is Algorithm~\ref{alg:hybrid} where \mbox{\textsc{ApproxMCMC}} is done via Algorithm~\ref{IMH} with adaptative~proposal~$Q = \pi^{\cdot}_{\tilde{\theta}}$.

  \item[HSVRG-AIS] This is Algorithm~\ref{alg:hybrid} where \mbox{\textsc{ApproxMCMC}} is done via Algorithm~\ref{AIS}, that is Adaptative~Importance~Sampling.




  \item[SVRG-MB] Mini-Batch Prox-SVRG described in~\cite{SVRG_MB}, which can be seen as Algorithm \ref{alg:hybrid} (see below) with $K_S = 0$.
  This is a \emph{simply} stochastic algorithm, since there is no MCMC approximation of the gradients $\grad f_i$. The question of mini-batch sizing is critical and is adressed in Appendix~\ref{appen:mb_size}. We used $n_\text{mb} = 0.1 n$ or $n_\text{mb} = 0.01 n$ in our experiments.
\end{description}

The ``simply stochastic'' counterpart SVRG-MB is way slower than the corresponding doubly stochatic versions, since they rely on many computations of stochastic gradients $\grad f_i$, which are numerically costly, as explained above.
The same settings are used throughout all experiments, some of them being tuned by hand: steps size for the variants of HSVRG are taken as $\gamma_t = \gamma_0 \in \{10^{-2},10^{-3},10^{-4}\}$ where $\gamma_0$ depends on the dataset, the phase length $m$ is equal to the number $n$ of failures of each datasets as suggested in~\cite{semi-sgd}.
As mentionned above, the doubly stochastic algorithms use different verions of \mbox{\textsc{ApproxMCMC}}.

\paragraph{Datasets}



We compare algorithms on the following datasets.
The first three are standard benchmarks in survival analysis, the fourth one is a large simulated dataset where the number of observations $n$ exceeds the number of features $d$.
This differs from supervised gene expression data: such a large-scale setting happens for longitudinal clinical trials, medical adverse event monitoring and business data minings tasks.

\begin{itemize}
\item NKI70 contains survival data for 144 breast cancer patients, 5 clinical covariates and the expressions from 70 gene signatures, see~\cite{van2002gene}.

\item Luminal contains survival data for 277 patients with breast cancer who received the adjuvant tamoxifen, with 44,928 expressions measurements, see~\cite{loi2007definition}.

\item Lymphoma contains 7399 gene expressions data for 240 lymphoma patients. The data was originally published in~\cite{alizadeh2000distinct}.

\item We generated a Gaussian features matrix $X$ with $n=10,000$ observations and $d=500$ predictors, with a Toeplitz covariance and correlation equal to $0.5$. The failure times follow a Weibull distribution. See Appendix~\ref{appen:simu} for details on simulation in this model.

\end{itemize}

We compare in Figures~\ref{fig:high_ridge1} and~\ref{fig:high_ridge2} all algorithms for ridge penalization, namely $\alpha = 0$ and $\lambda = 1 / \sqrt n$.
Experiences with other values of $\alpha$ and $\lambda$ are given in Appendix (including the Lasso penalization for instance).

\begin{figure}
  \centering
  \includegraphics[width=0.48\textwidth]{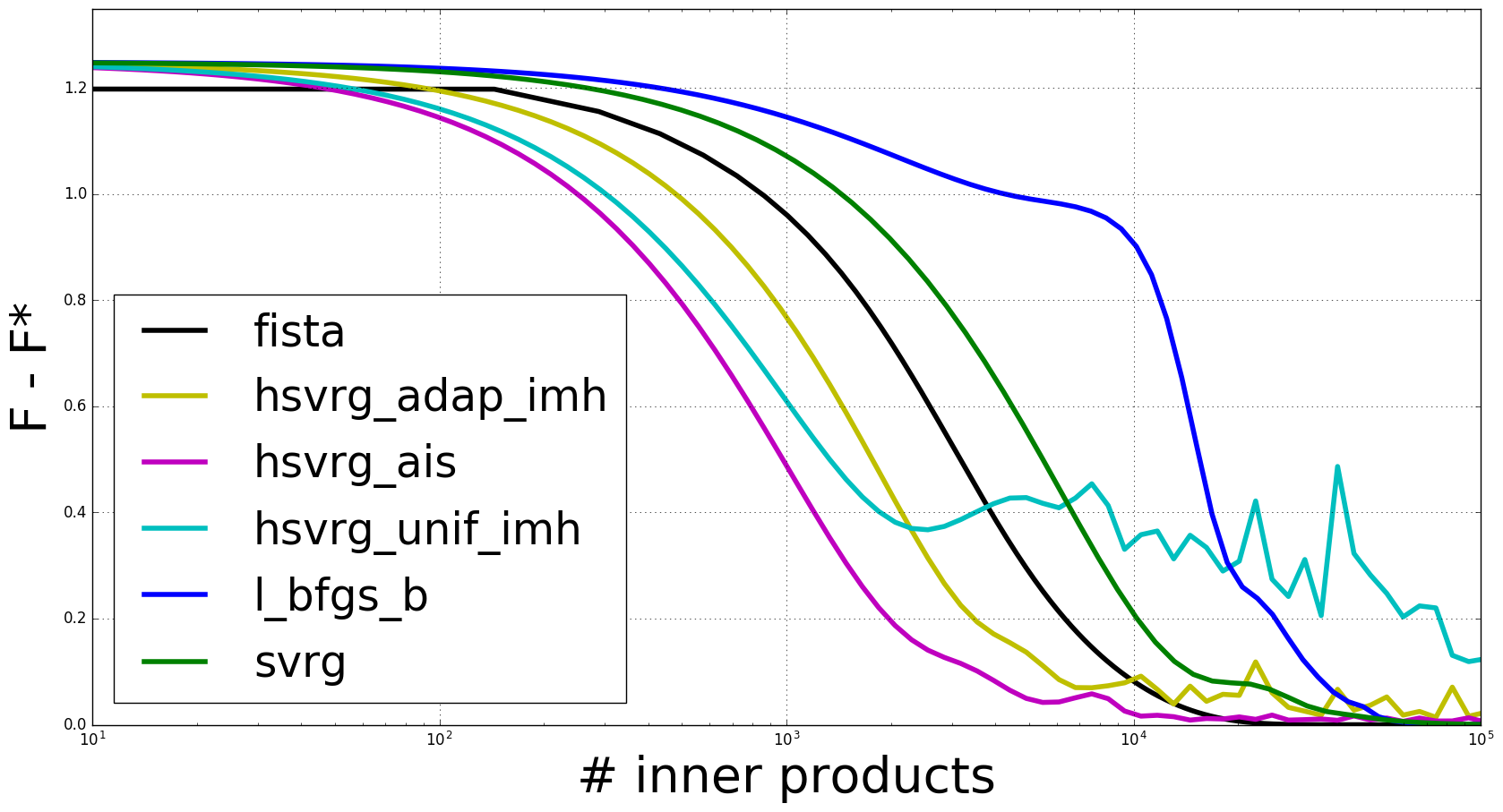}
  \includegraphics[width=0.48\textwidth]{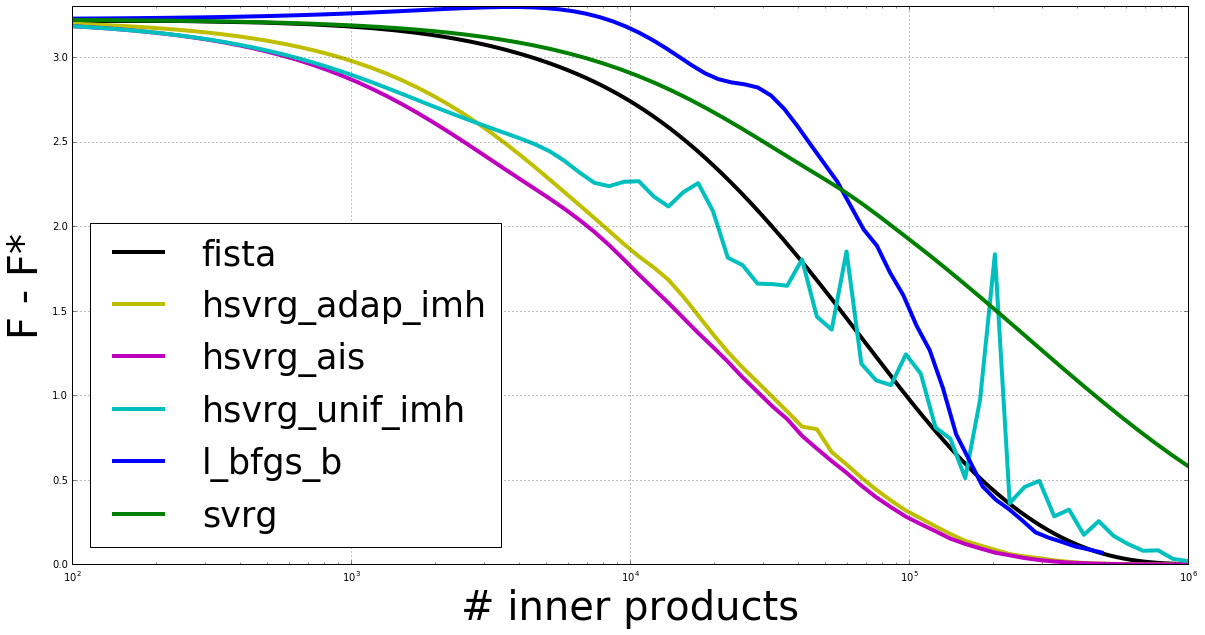}
  \caption{Distance to optimum of all algorithms on NKI70 (left) and Lymphoma (right) with ridge penalization ($\alpha = 0$ and $\lambda = 1 / \sqrt n$)}
  \label{fig:high_ridge1}
\end{figure}

\begin{figure}[htbp]
  \centering
  \includegraphics[width=0.48\textwidth]{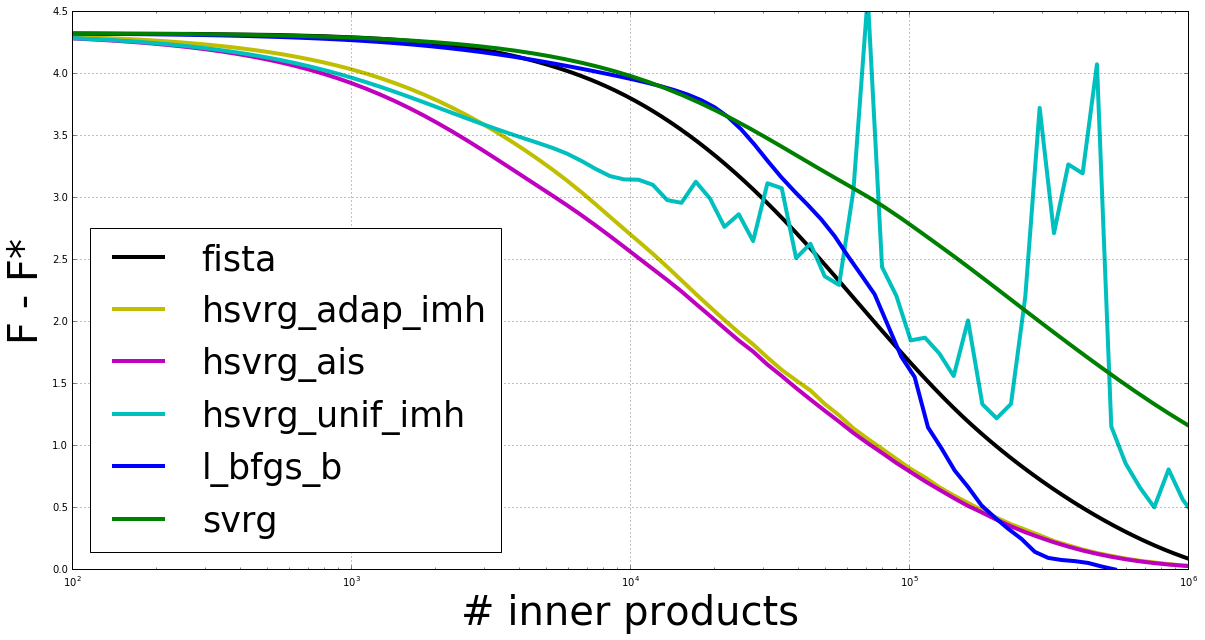}
  \includegraphics[width=0.48\textwidth]{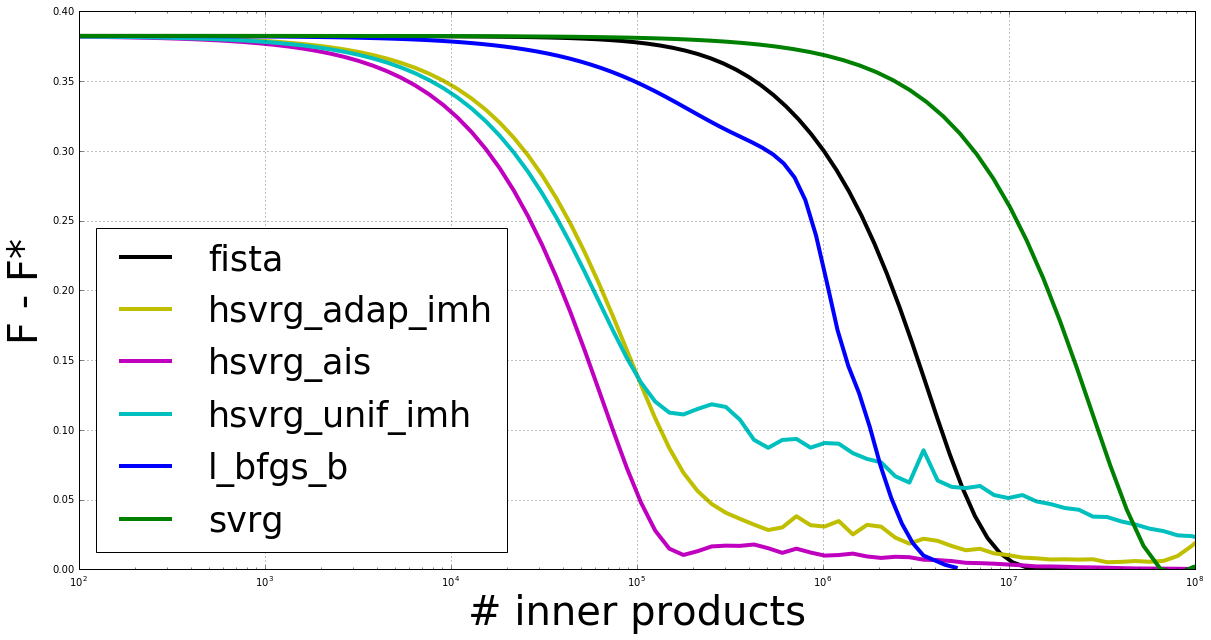}
  \caption{Distance to optimum of all algorithms on Luminal (left) and on the simulated dataset (right) with ridge penalization ($\alpha = 0$ and $\lambda = 1 / \sqrt n$)}
  \label{fig:high_ridge2}
\end{figure}

\paragraph{Conclusions.}


The experiments first show that the solvers HSVRG-ADAP-IMH and HSVRG-AIS give better results than HSVRG-UNIF-IMF. However, the HSVRG solvers behave particularly well during the first phases where the gradients can be noisy - due to a small number of iterations of the MCMC - and still point a decent descent direction.

\section{Conclusion}
\label{sec:conclusion}

 We have proposed a \emph{doubly} stochastic gradient algorithm to extend SGD-like algorithms beyond the empirical risk minimization setting. The algorithm we proposed is the result of two different ideas: sampling from uniform distribution to avoid the computation of a large sum, and sampling using MCMC methods to avoid the computation of a more complicated expectation. We have also provided theoretical guarantees of convergence for both the convex and the strongly-convex setting.

This \emph{doubly} stochastic gradient algorithm is very efficient during the early phases. The hybrid version of our algorithm, at the crossing of \emph{simply} and \emph{doubly} stochastic gradient algorithms, significantly outperforms state-of-the-art methods.

In a future work, we intend to extend our algorithm to Conditional Random Fields (CRF), where each subfunction's gradient takes the form
 \[
 \nabla f_i (\theta) = \nabla (- \log( p(y_i | x_i, \theta)) = \sum_{Y \in \mathcal{Y}_i} \frac{e^{H(X_i,Y)^\top \theta}}{\sum_{Y' \in \mathcal{Y}_i} e^{H(X_i,Y')^\top \theta}} (H(X_i,Y) - H(X_i,Y_i)),
 \]
 for a certain function $H$ (see Page 2 in~\cite{crf_schmidt2}).
Notice that the Cox negative partial likelihood can be seen as a particular case of CRF by setting $X_i = [x_j]_{j \in R_i} \in \mathbb{R}^{d \times |R_i|}$, $Y_i = [\mathbbm{1}_{j \in R_i}]_{j \in R_i} \in \{0,1\}^{|R_i|}$, $H(X,Y) = X Y$ and $\mathcal{Y}_i = \{ [\mathbbm{1}_{j=k}]_{j \in R_i}: k \in R_i \}$.

\newpage

\appendix

\section{Proofs}
\label{sec:proofs}

\subsection{Proof of Proposition~\ref{prop:IMH}}

We first prove Proposition~\ref{prop:IMH} that ensures that Algorithm~\ref{IMH} provides the bounds of~\ref{assump:2}.
\begin{proof}
Since there exists $M > 0$ such that the proposal $Q$ and the stationary distribution $\pi$ satisfy $\pi (x) \le M Q(x)$, for all $x$ in the support of $\pi$, the Theorem 7.8 in~\cite{robert2004monte} states that the Algorithm~\ref{IMH} produces a geometrically ergodic Markov kernel $P$ with ergodicity constants uniformly controlled:
\begin{align}
\|P^{k}(x, \cdot) - \pi \|_{TV} \le 2 \left( 1 - \frac{1}{M} \right)^k,
\end{align}
where $P^{k}$ is the kernel of the $k^{th}$ iteration of the algorithm and $\| \cdot \|_{TV}$ is the total variation norm. Since $\widehat \grad f_{i_t} (\theta^{t-1})$ is computed as the mean of the iterates of the Markov chain, a simple computation enables us to bound the bias of the error and Proposition 12 from~\cite{fort2003} gives the upper bound for the expected squared error:
\begin{align}
\|\mathbb{E}_t \eta^t\| \le \frac{C_1}{N_k} \mbox{ and } \mathbb{E}_t \| \eta^{t} \|^2  \le \frac{C_2}{N_k}
\end{align}
where $C_1$ and $C_2$ are some finite constants, and $N_k$ the number of iterations of the Markov chain. It can be shown that $C_1 = 2 M$ and that $C_2$ is related to a constant from the Rosenthal's inequality.
\end{proof}

\subsection{Preliminaries to the proofs of Theorems~\ref{thm:1} and~\ref{thm:2}}

In what follows, the key lemmas for the proofs of Theorems~\ref{thm:1} and~\ref{thm:2} are stated and proved when not directly borrowed from previous articles.

\begin{lemma}
\label{bound_var}
For $\Delta^t := \widehat \grad f_{i_t} (\theta^{t-1}) - \grad f_{i_t} (\tilde{\theta}) + \grad f(\tilde{\theta}) - \nabla f(\theta^{t-1})$, we have:
\begin{align*}
\mathbb{E}_t \| \Delta^t \|^2 &\le 8 L [ F(\theta^{t-1}) - F(\theta^*) \\
&+ F(\tilde{\theta}) - F(\theta^*) ] + 3 \mathbb{E}_t \| \eta^{t} \|^2.
\end{align*}
\end{lemma}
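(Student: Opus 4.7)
The plan is to split $\Delta^t$ into the ``pure'' Prox-SVRG deviation and the MCMC bias:
\begin{equation*}
\Delta^t \;=\; \underbrace{\bigl(\nabla f_{i_t}(\theta^{t-1}) - \nabla f_{i_t}(\tilde\theta) + \nabla f(\tilde\theta) - \nabla f(\theta^{t-1})\bigr)}_{V^t} \;+\; \eta^t,
\end{equation*}
so that $V^t$ is exactly the variance-reduced SVRG error one would have if the MCMC approximation were exact, and $\eta^t$ carries the additional noise from \textsc{ApproxMCMC}. First I would apply a Young-type inequality $\|V^t+\eta^t\|^2 \le (1+\alpha)\|V^t\|^2 + (1+1/\alpha)\|\eta^t\|^2$ with a suitable $\alpha\in(0,1)$, so that $\mathbb E_t\|\eta^t\|^2$ appears linearly on the right-hand side and the SVRG-like piece can be treated separately.

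For the SVRG-like piece the argument mirrors Lemma~3.4 of \cite{SVRG}. Conditionally on $\mathcal F_{t-1}$, $V^t$ is a centered random vector in the random index $i_t$, which gives $\mathbb E_{i_t}\|V^t\|^2 \le \mathbb E_{i_t}\|\nabla f_{i_t}(\theta^{t-1}) - \nabla f_{i_t}(\tilde\theta)\|^2$. Then, using that each $f_i$ is convex and $L$-smooth, the standard co-coercivity inequality applied to $g_i(y)=f_i(y)-f_i(\theta^*)-\nabla f_i(\theta^*)^\top(y-\theta^*)$ (which is nonnegative, $L$-smooth and minimized at $\theta^*$) yields
\begin{equation*}
\frac{1}{n}\sum_{i=1}^n \|\nabla f_i(y) - \nabla f_i(\theta^*)\|^2 \;\le\; 2L\bigl(f(y) - f(\theta^*) - \nabla f(\theta^*)^\top(y-\theta^*)\bigr).
\end{equation*}
The composite optimality condition $-\nabla f(\theta^*)\in\partial h(\theta^*)$ gives $-\nabla f(\theta^*)^\top(y-\theta^*)\le h(y)-h(\theta^*)$, so the right-hand side upgrades to $2L\bigl(F(y)-F(\theta^*)\bigr)$. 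Applying this at $y=\theta^{t-1}$ and $y=\tilde\theta$ and combining with $\|a-b\|^2\le 2\|a\|^2+2\|b\|^2$ entails
\begin{equation*}
\mathbb E_{i_t}\|V^t\|^2 \;\le\; 4L\bigl[F(\theta^{t-1})-F(\theta^*) + F(\tilde\theta)-F(\theta^*)\bigr].
\end{equation*}
Plugging this back into the Young decomposition and collecting constants produces the announced inequality.

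The only delicate step is the use of the first-order optimality of $\theta^*$ for the \emph{composite} objective $F=f+h$ to replace the affine term $-\nabla f(\theta^*)^\top(y-\theta^*)$ by the nonnegative gap $h(y)-h(\theta^*)$; this is what allows the bound to be expressed in terms of suboptimality of $F$ (as is needed later in Theorems~\ref{thm:1} and~\ref{thm:2}) rather than of the smooth part $f$ alone. Every remaining ingredient is either a one-line algebraic inequality (Young, $\|a+b\|^2\le 2\|a\|^2+2\|b\|^2$, variance $\le$ second moment) or a direct consequence of smoothness and convexity of each $f_i$.
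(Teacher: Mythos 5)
Your proof is correct and follows essentially the same route as the paper: the same decomposition of $\Delta^t$ into the exact variance-reduced SVRG deviation plus the MCMC error $\eta^t$, and the same key bound $\frac{1}{n}\sum_{i}\|\nabla f_i(y)-\nabla f_i(\theta^*)\|^2\le 2L\,(F(y)-F(\theta^*))$ (Lemma~\ref{lemma1}, which you re-derive via co-coercivity and the composite optimality condition rather than cite). The only difference is in handling the cross terms: where the paper repeatedly applies the identity $\mathbb{E}_t\|\xi\|^2=\mathbb{E}_t\|\xi-\mathbb{E}_t\xi\|^2+\|\mathbb{E}_t\xi\|^2$ to arrive at $2\,\mathbb{E}_t\|d^t_{i_t}\|^2+3\,\mathbb{E}_t\|\eta^t\|^2$, you use Young's inequality with $\alpha=1/2$, which yields the slightly sharper intermediate constant $6L$ in place of $8L$ before relaxing to the stated bound.
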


The proof of Lemma~\ref{bound_var} uses Lemma 1 in~\cite{SVRG}.
\begin{lemma}{\cite{SVRGold, SVRG}}
\label{lemma1}
Consider $F$ satisfying~\ref{assump:1}. Then,
\[
\frac{1}{n} \sum_{i=1}^n \| \nabla f_i (\theta) - \nabla f_i (\theta^*) \|^2 \le 2 L [ F(\theta) - F(\theta^*)]
\]
\end{lemma}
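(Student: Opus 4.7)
The plan is to establish the bound through a per-index co-coercivity inequality, average over $i$, and then close the gap between the Bregman-type quantity that arises and the composite suboptimality $F(\theta) - F(\theta^*)$ by invoking the optimality condition for $\theta^*$ as a minimizer of $F = f + h$.

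First, for each $i$, define the auxiliary function $g_i(\theta) = f_i(\theta) - f_i(\theta^*) - \nabla f_i(\theta^*)^\top(\theta - \theta^*)$. By \ref{assump:1}, $g_i$ is convex and $L_i$-smooth, with $\nabla g_i(\theta^*) = 0$, so $\theta^*$ is a global minimizer of $g_i$. Applying the standard ``descent lemma'' inequality for smooth convex functions with minimizer at $\theta^*$ (which follows from one step of gradient descent from $\theta$ with step size $1/L_i$), one obtains
\begin{equation*}
\tfrac{1}{2 L_i} \| \nabla g_i(\theta) \|^2 \le g_i(\theta) - g_i(\theta^*) = g_i(\theta),
\end{equation*}
which, in terms of $f_i$, reads
\begin{equation*}
\tfrac{1}{2 L_i} \| \nabla f_i(\theta) - \nabla f_i(\theta^*) \|^2 \le f_i(\theta) - f_i(\theta^*) - \nabla f_i(\theta^*)^\top (\theta - \theta^*).
\end{equation*}

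Next, using $L_i \le L$ and averaging this inequality over $i = 1, \dots, n$, I get
\begin{equation*}
\tfrac{1}{2 L} \cdot \tfrac{1}{n} \sum_{i=1}^n \| \nabla f_i(\theta) - \nabla f_i(\theta^*) \|^2 \le f(\theta) - f(\theta^*) - \nabla f(\theta^*)^\top (\theta - \theta^*).
\end{equation*}

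Finally, I close the gap with $F(\theta) - F(\theta^*)$ using the composite first-order optimality condition. Since $\theta^*$ minimizes $F = f + h$ with $h$ convex and lower-semicontinuous, there exists $v \in \partial h(\theta^*)$ with $\nabla f(\theta^*) + v = 0$. Convexity of $h$ gives $h(\theta) \ge h(\theta^*) + v^\top (\theta - \theta^*) = h(\theta^*) - \nabla f(\theta^*)^\top (\theta - \theta^*)$, hence
\begin{equation*}
- \nabla f(\theta^*)^\top (\theta - \theta^*) \le h(\theta) - h(\theta^*).
\end{equation*}
Plugging this into the previous bound yields $\tfrac{1}{2L} \cdot \tfrac{1}{n} \sum_i \|\nabla f_i(\theta) - \nabla f_i(\theta^*)\|^2 \le F(\theta) - F(\theta^*)$, which is the claimed inequality after multiplying through by $2L$.

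The only subtle step is the third one: the lemma as stated in \cite{SVRGold, SVRG} is usually proved for purely unconstrained smooth minimization ($h \equiv 0$), in which case $\nabla f(\theta^*) = 0$ and the Bregman term vanishes on its own. In our composite setting $\nabla f(\theta^*)$ need not be zero, so the subdifferential inclusion for $h$ at $\theta^*$ is exactly what compensates for the linear remainder. Everything else is a routine combination of convexity and smoothness, applied index-wise and then averaged.
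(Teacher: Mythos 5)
Your proof is correct and matches the argument the paper relies on: the lemma is stated without proof here, borrowed from \cite{SVRGold, SVRG}, and the proof in the cited Prox-SVRG reference proceeds exactly as yours --- apply the inequality $\frac{1}{2L_i}\|\nabla g_i(\theta)\|^2 \le g_i(\theta)$ to $g_i(\theta) = f_i(\theta) - f_i(\theta^*) - \nabla f_i(\theta^*)^\top(\theta - \theta^*)$, average over $i$, and absorb the residual linear term using the optimality condition $-\nabla f(\theta^*) \in \partial h(\theta^*)$ together with convexity of $h$. Your closing remark correctly identifies the one genuinely composite-specific step, namely that $\nabla f(\theta^*) \neq 0$ in general and the subgradient inclusion for $h$ is what closes the gap to $F(\theta) - F(\theta^*)$.
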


\begin{proof}[Proof of Lemma~\ref{bound_var}]
For the sake of simplicity, we now denote $d^t_i = \nabla f_{i} (\theta^{t-1}) - \nabla f_{i} (\tilde{\theta})$ and $d^t = \nabla f (\theta^{t-1}) - \nabla f (\tilde{\theta})$, so that one gets $\Delta^t = d^t_{i_t} - d^t + \eta^{t}$.
Then,
using the expectation introduced in Section~\ref{sec:theory},
we repeatedly use the identity $\mathbb{E}_t \| \xi \|^2 = \mathbb{E}_t \| \xi - \mathbb{E}_t \xi\|^2 + \|\mathbb{E}_t \xi\|^2$. First with
$\xi = \Delta^t$ (since $\mathbb{E}_t d^t_{i_t} = d^t$, one gets
 $\mathbb{E}_t \xi = \mathbb{E}_t \eta^{t}$)~:
\begin{align*}
 \mathbb{E}_t \| \Delta^t \|^2
 & = \mathbb{E}_t \| d^t_{i_t} + \eta^{t} - \left( d^t + \mathbb{E}_t \eta^{t} \right)\|^2 + \|\mathbb{E}_t \eta^{t}\|^2
\end{align*}
then, successively with $\xi = d^t_{i_t}  + \eta^{t}$, $\xi = d^t  + \eta^{t}$ and finally $\xi = \eta^{t}$ :
\begin{align*}
\mathbb{E}_t \| \Delta^t \|^2  &= \mathbb{E}_t \| d^t_{i_t} + \eta^{t}\|^2 + \|\mathbb{E}_t \eta^{t}\|^2 - \|d^t + \mathbb{E}_t \eta^{t}\|^2 \\
&= \mathbb{E}_t \| d^t_{i_t} + \eta^{t}\|^2 + \|\mathbb{E}_t \eta^{t}\|^2 \\
&- \left( \mathbb{E}_t \|d^t + \eta^{t}\|^2 - \mathbb{E}_t \|\eta^t - \mathbb{E}_t \eta^t \|^2 \right) \\
 & = \mathbb{E}_t \| d^t_{i_t} + \eta^{t}\|^2 + \mathbb{E}_t \|\eta^t\|^2- \mathbb{E}_t \|d^t + \eta^{t}\|^2 .
\end{align*}
Now we remark that $\mathbb{E}_t \|d^t + \eta^{t}\|^2 \ge 0$, and the identity $\|a+b\|^2 \le 2 \|a\|^2 + 2 \|b\|^2$ gives the majoration
\begin{align*}
 \mathbb{E}_t \| \Delta^t \|^2& \le  2 \mathbb{E}_t \| d^t_{i_t}\|^2 + 3 \mathbb{E}_t \|\eta^t\|^2.
\end{align*}
Now rewriting $d^t_{i_t} = \nabla f_{i_t} (\theta^{t-1}) - \nabla f_{i_t} (\theta^*) + \nabla f_{i_t} (\theta^*) - \nabla f_{i_t} (\tilde{\theta})$, the same identity leads to
\begin{align*}
  \mathbb{E}_t \| \Delta^t \|^2&\le  4 \mathbb{E}_t \| \nabla f_{i_t} (\theta^{t-1}) - \nabla f_{i_t} (\theta^*)\|^2 \\
  &+ 4 \mathbb{E}_t \| \nabla f_{i_t} (\tilde{\theta}) - \nabla f_{i_t} (\theta^*)\|^2 + 3 \mathbb{E}_t \|\eta^t\|^2.
\end{align*}
The desired result follows applying twice Lemma~\ref{lemma1}.
\end{proof}

When $F$ is $\mu$-strongly convex, the next Lemma (Lemma 3 in~\cite{SVRG}) provides a key lower bound.

\begin{lemma}{\cite{SVRG}}
\label{grad_map}
Consider $F = f + h$ satifying~\ref{assump:1}, where $f$ is $L_f$-smooth, $L_f>0$, $f$ is $\mu_f$-strongly convex, $\mu_f \ge 0$, $h$ is $\mu_h$-strongly convex, $\mu_h \ge 0$. For any $x, v \in \mathbb{R}^d$, we define $x^{+} = \prox_{\gamma h} (x - \gamma v)$, $g = \frac{1}{\gamma} (x - x^{+})$, where $\gamma \in (0,\frac{1}{L_f}]$. Then, for any $y \in \mathbb{R}^d$\textup:
\begin{align*}
F(y) & \ge F(x^{+}) + g^\top (y-x) + \frac{\gamma}{2} \|g\|^2 + \frac{\mu_f}{2} \|y-x\|^2  \\
&+ \frac{\mu_h}{2} \|y-x^{+}\|^2 + (v - \nabla f (x))^\top (x^{+} - y). \numberthis \label{lemme3}
\end{align*}
\end{lemma}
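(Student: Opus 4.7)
\textbf{Proof plan for Lemma~\ref{grad_map}.} The statement is a standard three-point inequality for a proximal gradient step on a composite $\mu_f$-strongly convex smooth part plus a $\mu_h$-strongly convex prox-capable part, and I would prove it by combining three ingredients: the smoothness upper bound on $f$ at $x^+$, the strong-convexity lower bound on $f$ at $y$, and a subgradient inequality for $h$ obtained from the optimality condition of the proximal operator.

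First I would use the first-order optimality condition for $x^+ = \argmin_u \{ h(u) + \frac{1}{2\gamma}\|u - (x-\gamma v)\|^2 \}$, which reads $g - v \in \partial h(x^+)$ where $g = \frac{1}{\gamma}(x - x^+)$. Combined with the $\mu_h$-strong convexity of $h$, this yields
\begin{equation*}
h(y) \ge h(x^+) + (g - v)^\top (y - x^+) + \frac{\mu_h}{2}\|y - x^+\|^2.
\end{equation*}
Next I would apply $\mu_f$-strong convexity of $f$ at $x$ to get $f(y) \ge f(x) + \nabla f(x)^\top(y - x) + \frac{\mu_f}{2}\|y - x\|^2$, and the $L_f$-smoothness descent lemma at $x^+$ to get $f(x^+) \le f(x) + \nabla f(x)^\top(x^+ - x) + \frac{L_f}{2}\|x^+ - x\|^2$. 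Since $\gamma \le 1/L_f$ and $\|x - x^+\|^2 = \gamma^2 \|g\|^2$, the quadratic term is upper bounded by $\frac{\gamma}{2}\|g\|^2$, giving $f(x^+) - f(x) \le \nabla f(x)^\top(x^+ - x) + \frac{\gamma}{2}\|g\|^2$. Subtracting this from the strong-convexity bound at $y$ yields
\begin{equation*}
f(y) - f(x^+) \ge \nabla f(x)^\top(y - x^+) + \frac{\mu_f}{2}\|y - x\|^2 - \frac{\gamma}{2}\|g\|^2.
\end{equation*}

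Adding the two displayed inequalities, the linear terms in $\nabla f(x)$ and $g - v$ combine into $g^\top(y - x^+) + (\nabla f(x) - v)^\top(y - x^+) = g^\top(y - x^+) + (v - \nabla f(x))^\top(x^+ - y)$, so
\begin{equation*}
F(y) - F(x^+) \ge g^\top(y - x^+) - \frac{\gamma}{2}\|g\|^2 + \frac{\mu_f}{2}\|y - x\|^2 + \frac{\mu_h}{2}\|y - x^+\|^2 + (v - \nabla f(x))^\top(x^+ - y).
\end{equation*}
The last step is a bookkeeping identity: since $x^+ - x = -\gamma g$, one has $g^\top(y - x^+) = g^\top(y - x) + \gamma\|g\|^2$, so $g^\top(y - x^+) - \frac{\gamma}{2}\|g\|^2 = g^\top(y - x) + \frac{\gamma}{2}\|g\|^2$, which turns the right-hand side into exactly the claimed expression.

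The only real subtlety is keeping the signs straight in this final algebraic rearrangement and verifying that the constraint $\gamma \le 1/L_f$ is used exactly once, to replace $\frac{L_f}{2}\|x^+-x\|^2$ with $\frac{\gamma}{2}\|g\|^2$; every other step uses only convexity. No use of Assumption~\ref{assump:1}'s finite-sum structure is needed, since the lemma is a purely deterministic one-step inequality for a generic composite objective.
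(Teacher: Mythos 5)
Your proof is correct. Note that the paper does not actually prove this lemma: it is stated as directly borrowed from \cite{SVRG} (Lemma~3 of Xiao and Zhang's Prox-SVRG paper), and the appendix explicitly says that borrowed lemmas are not re-proved. Your argument --- combining the prox optimality condition $g - v \in \partial h(x^+)$ with $\mu_h$-strong convexity of $h$, the $\mu_f$-strong convexity lower bound on $f$ at $x$, the descent lemma at $x^+$ with $\gamma \le 1/L_f$ to absorb $\tfrac{L_f}{2}\|x^+-x\|^2$ into $\tfrac{\gamma}{2}\|g\|^2$, and the final substitution $y - x^+ = (y-x) + \gamma g$ --- is exactly the standard derivation in that reference, and every algebraic step checks out.
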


\begin{remark}
Note that in Lemma~\ref{grad_map}, one can freely choose $\mu_f$ and $\mu_h$ (in particular one can take $\mu_f = 0$ or $\mu_h = 0$), as long as $\mu_f + \mu_h = \mu$.
\end{remark}

The following Lemma comes from~\cite{moulines} (Lemma 14):
\begin{lemma}{\cite{moulines}}
\label{prox_x_star}
Consider $F = f + h$ satifying~\ref{assump:1}, where $f$ is $L_f$-smooth, and $T_\gamma : x \mapsto \prox_{\gamma h} [x - \gamma \nabla f (x)]$ with $\gamma \in (0, 2 / L_f]$. Let $x, y \in \mathbb{R}^d$, we have:
\begin{equation*}
\| T_\gamma (x) - T_\gamma (y) \| \le \|x - y\|
\end{equation*}
\end{lemma}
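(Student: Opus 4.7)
The plan is to decompose $T_\gamma$ as the composition of the forward gradient step $G_\gamma : x \mapsto x - \gamma \nabla f(x)$ with the backward proximal step $\prox_{\gamma h}$, and to show each piece is non-expansive. Since the composition of non-expansive maps is non-expansive, this immediately yields the claim. Concretely, I would write $T_\gamma(x) - T_\gamma(y) = \prox_{\gamma h}(G_\gamma(x)) - \prox_{\gamma h}(G_\gamma(y))$ and reduce to two inequalities: $\|\prox_{\gamma h}(u) - \prox_{\gamma h}(v)\| \le \|u - v\|$ and $\|G_\gamma(x) - G_\gamma(y)\| \le \|x - y\|$.

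The first inequality is the classical firm non-expansiveness of the proximal operator of a proper convex lower-semicontinuous function, which follows from the monotonicity of the subdifferential $\partial h$; I would simply cite it as a standard fact (e.g.\ Moreau) since $h$ satisfies Assumption~1. The second inequality is where the restriction $\gamma \in (0, 2/L_f]$ enters. The key ingredient is the Baillon--Haddad theorem: since $f$ is convex and $L_f$-smooth, its gradient $\nabla f$ is $(1/L_f)$-cocoercive, meaning
\begin{equation*}
\langle \nabla f(x) - \nabla f(y),\, x - y\rangle \;\ge\; \frac{1}{L_f}\|\nabla f(x) - \nabla f(y)\|^2.
\end{equation*}

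Expanding the square,
\begin{align*}
\|G_\gamma(x) - G_\gamma(y)\|^2
&= \|x - y\|^2 - 2\gamma \langle \nabla f(x) - \nabla f(y), x - y\rangle + \gamma^2 \|\nabla f(x) - \nabla f(y)\|^2 \\
&\le \|x - y\|^2 + \gamma\Big(\gamma - \frac{2}{L_f}\Big) \|\nabla f(x) - \nabla f(y)\|^2,
\end{align*}
and the last term is nonpositive precisely when $\gamma \le 2/L_f$, giving $\|G_\gamma(x) - G_\gamma(y)\| \le \|x-y\|$. Chaining with the non-expansiveness of $\prox_{\gamma h}$ concludes the proof.

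There is no real obstacle here; this is a textbook computation. The only subtlety worth flagging is the use of cocoercivity rather than mere Lipschitzness of $\nabla f$: without convexity of $f$, one would only get non-expansiveness of $G_\gamma$ for $\gamma \le 1/L_f$, whereas Assumption~1 provides convexity and hence the stronger range $\gamma \le 2/L_f$ claimed in the statement.
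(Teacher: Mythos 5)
Your proof is correct. The paper itself does not prove this lemma: it is imported verbatim as Lemma~14 of the cited reference (Atchad\'e, Fort and Moulines), and the argument given there is exactly the one you reconstruct, namely writing $T_\gamma = \prox_{\gamma h} \circ G_\gamma$, invoking non-expansiveness of the proximal operator, and using the $(1/L_f)$-cocoercivity of $\nabla f$ (Baillon--Haddad) to get $\|G_\gamma(x)-G_\gamma(y)\|^2 \le \|x-y\|^2 + \gamma(\gamma - 2/L_f)\|\nabla f(x)-\nabla f(y)\|^2 \le \|x-y\|^2$ for $\gamma \le 2/L_f$. One small quibble with your closing remark: without convexity of $f$ you do not get non-expansiveness of $G_\gamma$ for $\gamma \le 1/L_f$ either (take $f(x) = -\tfrac{L}{2}\|x\|^2$, for which $G_\gamma$ is expansive for every $\gamma>0$); what $\gamma \le 1/L_f$ buys in the non-convex smooth case is descent of the objective, not non-expansiveness of the forward map. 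This does not affect the proof itself, which is sound.
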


\subsection{Proof of Theorem~\ref{thm:1}}

\begin{proof}
The proof begins with the study of the distance $\|\theta^t - \theta^*\|^2$ \emph{between} the phases $k-1$ and $k$. To ease the reading, when staying between these two phases, we write $\tilde{\theta}$ instead of $\tilde{\theta}^{k-1}$. Introducing $g^t = \frac{1}{\gamma} (\theta^{t-1} - \theta^t)$, we may write:
\begin{align*}
\|\theta^t - \theta^*\|^2 & = \|\theta^{t-1} - \gamma g^t - \theta^*\|^2 \\
& = \|\theta^{t-1} - \theta^*\|^2  - 2 \gamma (g^t)^\top (\theta^{t-1} - \theta^*) \\
&+ \gamma^2 \|g^t\|^2.
\end{align*}
To upper bound the term $- 2 \gamma (g^t)^\top (\theta^{t-1} - \theta^*) + \gamma^2 \|g^t\|^2$, we apply the Lemma~\ref{grad_map} with $x = \theta^{t-1}$, $x^{+} = \theta^t$ and $y = \theta^*$. With again $\Delta^t = \widehat \grad f_{i_t} (\theta^{t-1}) - \grad f_{i_t} (\tilde{\theta}) + \grad f(\tilde{\theta}) - \nabla f(\theta^{t-1})$, we obtain
\begin{align*}
- (g^t)^\top & (\theta^{t-1} - \theta^*)  + \frac{\gamma}{2} \|g^t\|^2 \\
&\le F(\theta^*) - F(\theta^t) - \frac{\mu_f}{2} \| \theta^{t-1} - \theta^*\|^2 \\
&- \frac{\mu_h}{2} \| x^{t} - \theta^*\|^2 - (\Delta^t)^\top (\theta^t - \theta^*),
\end{align*}
and
\begin{align*}
\|\theta^t - \theta^*\|^2 & \le \|\theta^{t-1} - \theta^*\|^2 + 2 \gamma [F(\theta^*) - F(\theta^t)] \\
& - 2 \gamma (\Delta^t)^\top (\theta^t - \theta^*) \numberthis \label{diff_norm}.
\end{align*}
We now concentrate on the quantity $- 2 \gamma (\Delta^t)^\top (\theta^t - \theta^*)$. Introducing $\nu^t = \mbox{prox}_{\gamma h} [\theta^{t-t1} - \gamma \nabla f(\theta^{t-1})] \in \mathcal{F}_{t-1}$ i.e. the vector obtained from $\theta^{t-1}$ after an exact proximal gradient descent step, we get
\begin{align*}
- 2 \gamma (\Delta^t)^\top & (\theta^t - \theta^*) \\
& = - 2 \gamma (\Delta^t)^\top (\theta^t - \nu^t) - 2 \gamma (\Delta^t)^\top (\nu^t - \theta^*)\\
& \le 2 \gamma \|\Delta^t\| \cdot \|\theta^t - \nu^t\| - 2 \gamma (\Delta^t)^\top (\nu^t - \theta^*)
\end{align*}
where the inequality follows from the Cauchy-Schwartz inequality. Now the non-expansiveness property of proximal operators $\|\mbox{prox}_{\gamma h} (x) - \mbox{prox}_{\gamma h} (y) \| \le \|x-y\|$ leads to
\begin{align*}
- 2 \gamma & (\Delta^t)^\top (\theta^t - \theta^*) \\
& \le 2 \gamma \|\Delta^t\| \cdot \|\{ \theta^{t-1} - \gamma (\Delta^t + \nabla f(\theta^{t-1})) \} \\
&- \{ \theta^{t-1} - \gamma \nabla f(\theta^{t-1}) \}\| - 2 \gamma (\Delta^t)^\top (\nu^t - \theta^*) \\
& \le 2 \gamma^2 \|\Delta^t\|^2 - 2 \gamma (\Delta^t)^\top (\nu^t - \theta^*).
\end{align*}
Reminding that $\nu^t \in \mathcal{F}_{t-1}$, we derive:
\begin{align*}
- 2  \gamma & \mathbb{E}_t (\Delta^t)^\top (\theta^t - \theta^*) \\
&\le  2 \gamma^2 \mathbb{E}_t \|\Delta^t\|^2
 - 2 \gamma ( \mathbb{E}_t \Delta^t)^\top (\nu^t - \theta^*)\\
& \le  2 \gamma^2 \mathbb{E}_t \|\Delta^t\|^2
 + 2 \gamma \|\mathbb{E}_t \Delta^t\| \cdot \|\nu^t - \theta^*\|,
\end{align*}
the last inequality comes from the Cauchy-Schwartz inequality.
Since $\theta^*$ is the minimum of $F = f + h$, it satisfies $\theta^* = \mbox{prox}_{\gamma h} [\theta^* - \gamma \nabla f (\theta^*)]$. Thus, the Lemma~\ref{prox_x_star} and the~\ref{assump:1} on the sequence $(\theta^t)$ give us $\|\nu^t - \theta^*\| \le \|\theta^{t-1} - \theta^*\| \le B$. We also remark that $\mathbb{E}_t \Delta^t = \mathbb{E}_t \eta^t$. For all $t$ between phases $k-1$ and $k$, we finally apply Lemma~\ref{bound_var} to obtain:
\begin{align}
\label{youyou1}
- 2 \gamma & \mathbb{E}_t (\Delta^t)^\top (\theta^t - \theta^*) \nonumber\\
&\le 16 \gamma^2 L [ F(\theta^{t-1}) - F(\theta^*) + F(\tilde{\theta}) - F(\theta^*) ] \nonumber\\
& + 6 \gamma^2 \mathbb{E}_t \| \eta^{t} \|^2 + 2 \gamma B \|\mathbb{E}_t \eta^t\|.
\end{align}
Taking the expectation $\mathbb{E}_t$ on inequation \eqref{diff_norm} and combining with previous inequality leads to
\begin{align*}
\mathbb{E}_t \|\theta^t - & \theta^*\|^2  \le \|\theta^{t-1} - \theta^*\|^2 + 2 \gamma [F(\theta^*) - F(\theta^t)] \\
& + 16 \gamma^2 L [ F(\theta^{t-1}) - F(\theta^*) + F(\tilde{\theta}) - F(\theta^*) ] \\
& + 6 \gamma^2 \mathbb{E}_t \| \eta^{t} \|^2 + 2 \gamma B \|\mathbb{E}_t \eta^t\|.
\end{align*}
With the notation of Algorithm~\ref{alg:s2vrg}, $ \tilde{\theta} =\tilde{\theta}^{k-1} = \theta^0$. Now, applying iteratively the previous inequality over $t = 1,2,\ldots,m$ and taking the expectation $\mathbb{E}$ over $i_1, \theta^1, i_2, \theta^2, \ldots, i_m, \theta^m$, we obtain:
\begin{align*}
\mathbb{E} \| & \theta^m  - \theta^*\|^2  + 2 \gamma [ \mathbb{E} F(\theta^m) - F(\theta^*) ] \\
+ 2 & \gamma (1 - 8 L \gamma) \sum_{t=1}^{m-1} [ \mathbb{E} F(\theta^t) - F(\theta^*)] \\
&\le \|\theta^0 - \theta^*\|^2 + 16 L \gamma^2 [ F(\theta^0) - F(\theta^*) + m ( F(\tilde{\theta}) \\
&- F(\theta^*) )] + 6 \gamma^2 \sum_{t=1}^{m} \mathbb{E} \| \eta^{t} \|^2 + 2 \gamma B \sum_{t=1}^{m} \mathbb{E} \|\mathbb{E}_t \eta^t\|.
\end{align*}
Now, by convexity of $F$ and the definition $\tilde{\theta}^{k} = \frac{1}{m} \sum_{t=1}^m \theta^t$,  we may write $F(\tilde{\theta}^{k}) \le \frac{1}{m} \sum_{t=1}^m F(\theta^t)$.
Noticing that $2 \gamma (1 - 8 L \gamma) < 2 \gamma$ leads to
\begin{align*}
2 \gamma & (1 - 8 L \gamma) m [ \mathbb{E} F(\tilde{\theta}^{k}) - F(\theta^*)] \\
&\le \| \tilde{\theta} - \theta^* \|^2 + 16 L \gamma^2 (m+1) [ F(\tilde{\theta}) - F(\theta^*)] \\
&+ 6 \gamma^2 \sum_{t=1}^{m} \mathbb{E} \| \eta^{t} \|^2 + 2 \gamma B \sum_{t=1}^{m} \|\mathbb{E} \eta^t\|.
\end{align*}
Under the~\ref{assump:2}, we have
\begin{align*}
  6 \gamma^2 \sum_{t=1}^{m} & \mathbb{E} \| \eta^{t} \|^2 + 2 \gamma B \sum_{t=1}^{m} \|\mathbb{E} \eta^t\| \\
  &\leq( 6 \gamma^2 C_2 + 2 \gamma B C_1 ) \frac{m}{N_k}
\end{align*}
whereas the $\mu$-strong convexity of $F$ implies $\| \tilde{\theta}^{k-1} - \theta^* \|^2 \le \frac{2}{\mu} [ F(\tilde{\theta}^{k-1}) - F(\theta^*)]$. This leads to
\begin{align*}
\mathbb{E} F(\tilde{\theta}^{k}) - F(\theta^*) \le \rho \left( \mathbb{E} F(\tilde{\theta}^{k-1}) - F(\theta^*) \right) + \frac{D}{N_k}
\end{align*}
for $D$ and $\rho$ as defined in the theorem. Applying the last inequality recursively leads to the result.
\end{proof}

\subsection{Proof of Theorem~\ref{thm:2}}
\begin{proof}
As at the begining of the proof of Theorem~\ref{thm:1}, we consider that we stand between phase $k-1$ and phase $k$ of Algorithm~\ref{alg:s2vrg} and consequently $\theta^0 = \tilde \theta^{k-1}$.
We use the same arguments until~\eqref{diff_norm}, with the difference that, in this non-strongly convex case, we have $\mu_f=\mu_h=0$. We obtain for all $t$ between phases $1$ and $m$
\begin{align*}
F(\theta^t) - F(\theta^*) \le \frac{1}{2 \gamma} & (\|\theta^{t-1} - \theta^*\|^2 - \|\theta^{t} - \theta^*\|^2) \\
&- (\theta^t - \theta^*)^\top  \Delta^t.
\end{align*}
Summing over $t = 1,\ldots,\tau$ (for $\tau \leq m$) leads to
\begin{align*}
\sum_{t=1}^\tau [F(\theta^t) & - F(\theta^*)] \le \frac{1}{2 \gamma} (\sum_{t=0}^{\tau-1} \|\theta^{t} - \theta^*\|^2 \\
&-\sum_{t=1}^{\tau} \|\theta^{t} - \theta^*\|^2) - \sum_{t=1}^\tau (\theta^t - \theta^*)^\top \Delta^t.  \numberthis \label{eq:sum}
\end{align*}
We now use Equation~\eqref{eq:sum} (with $\tau=m$) and the convexity of $\|\cdot\|^2$ with $\tilde{\theta}^k =\frac{1}{m} \sum_{t=1}^m \theta^t $ to write
\begin{align*}
\sum_{t=1}^m [F(\theta^t) & - F(\theta^*)] \\
\le \frac{1}{2 \gamma} & \left( \sum_{t=0}^{m-1} \|\theta^{t} - \theta^*\|^2 - m \|\tilde{\theta}^k - \theta^*\|^2 \right) \\
&- \sum_{t=1}^m (\theta^t - \theta^*)^\top \Delta^t. \numberthis \label{eq:thm2}
\end{align*}
Starting from Equation \eqref{eq:sum} again but now summing over $l=1,\ldots,t$, we get
\begin{align*}
\frac{1}{2 \gamma} ( \|\theta^{0} - \theta^*\|^2 - & \|\theta^{t} - \theta^*\|^2) - \sum_{l=1}^t (\theta^l - \theta^*)^\top \Delta^l \\
&\geq  \sum_{l=1}^t [F(\theta^l) - F(\theta^*)] \\
&\geq 0,\numberthis \label{eq:maj_thm2}
\end{align*}
where the last inequality follows from the definition of $\theta^*$.
In \eqref{eq:thm2}, we now substitute $\|\theta^{t} - \theta^*\|^2$ by the upper bound derived from~\eqref{eq:maj_thm2} to write (noticing that $\theta^0=~\tilde{\theta}^{k-1}$):
\begin{align*}
\sum_{t=1}^m [F(\theta^t) &- F(\theta^*) ] \\
\le \frac{m}{2 \gamma} (\|&\tilde{\theta}^{k-1} - \theta^*\|^2 - \|\tilde{\theta}^k - \theta^*\|^2) \\
&- \sum_{t=1}^{m-1} \sum_{l=1}^{t} (\theta^l - \theta^*)^\top \Delta^l - \sum_{t=1}^m (\theta^t - \theta^*)^\top \Delta^t \\
\le \frac{m}{2 \gamma} (\|&\tilde{\theta}^{k-1} - \theta^*\|^2 - \|\tilde{\theta}^k - \theta^*\|^2) \\
&- \sum_{t=1}^{m} (m+1-t) (\theta^t - \theta^*)^\top \Delta^t.
\end{align*}
As in the proof of Theorem~\ref{thm:1} (see  Equation~\eqref{youyou1}), each term $-\mathbb E_t (\theta^t - \theta^*)^\top \Delta^t$ is upper bounded by $8 \gamma L [ F(\theta^{t-1}) - F(\theta^*) + F(\tilde{\theta}^{k-1}) - F(\theta^*) ]  + 3 \gamma \mathbb{E}_t || \eta^{t} ||^2 +  B ||\mathbb{E}_t \eta^t||$.
Now with $m+1-t \leq m$ and~\ref{assump:2}, we obtain:
\begin{align*}
\frac{1}{m} &\sum_{t=1}^m \mathbb{E} [F(\theta^t) - F(\theta^*)] \\
& \le \frac{1}{2 \gamma} (\|\tilde{\theta}^{k-1} - \theta^*\|^2 - \mathbb{E} \|\tilde{\theta}^k - \theta^*\|^2) \\
& + 8 L \gamma \big\{ \sum_{t=1}^m [ \mathbb{E} F(\theta^{t-1}) - F(\theta^*)] + F(\theta^*)-\mathbb E[ F(\theta^m)] \\
&+(m+1)[ \mathbb E[F(\tilde{\theta}^{k-1})] -  F(\theta^*)] \big\}  + m \frac{3 \gamma C_2 + B C_1}{N_k} .
\end{align*}
By definition of $\gamma$, we have $8 L m \gamma < 1$, and we can use the convexity of $F$ to lower bound the left hand side. With the inequality $\mathbb{E} [F(\theta^m)] - F(\theta^*) \ge 0$, one has:
\begin{align*}
(1 &- 8 L \gamma m) \left[ \mathbb{E} [F(\tilde{\theta}^k)] - F(\theta^*) \right] \\
& \le \frac{1}{2 \gamma} \left( \|\tilde{\theta}^{k-1} - \theta^*\|^2 - \mathbb{E} \|\tilde{\theta}^k - \theta^*\|^2 \right) \\
& + 8 L \gamma (m+1) \left[ \mathbb{E} [F(\tilde{\theta}^{k-1})] - F(\theta^*) \right] \\
&+ m \frac{3 \gamma C_2 + B C_1}{N_k}
\end{align*}
We now take the expectation $\mathbb{E}$ on all iterates of the algorithm i.e. on the iterates $i_1, \theta^1, i_2, \theta^2, \ldots, i_m, \theta^m$ from the first phase. Introduce the notations $A^k = \mathbb{E} [F(\tilde{\theta}^k)] - F(\theta^*)$ and $a=(8 L \gamma ( m + 1))/(1 - 8 L m \gamma)< 1$ , last inequality leads to:
\begin{align*}
A^k & - a A^{k-1} \\
&\le \frac{1}{2 \gamma (1 - 8 L m \gamma)} \left( \mathbb{E} \|\tilde{\theta}^{k-1} - \theta^*\|^2 - \mathbb{E} \|\tilde{\theta}^k - \theta^*\|^2 \right) \\
&+ \frac{D}{N_k},
\end{align*}where $D$ is defined in the theorem.
Summing over the phases $k=1,2,\ldots,K+1$ and lower bounding $A^{K+1}$ with $0$, we obtain:
\begin{align*}
&(1 - a) \sum_{k=1}^{K} A^k & \\&\le a A^0 + \frac{1}{2 \gamma (1 - 8 L m \gamma)} \|\tilde{\theta}^{0} - \theta^*\|^2 + \sum_{k=1}^{K+1} \frac{D}{N_k}
\end{align*}
The last argument is the use of the convexity of $F$. Remark the explicit forms of the constants in the theorem:
\begin{equation*}D_1 = \frac{a}{1 - a} A^0 + \frac{1}{1 - a} \frac{\|\tilde{\theta}^{0} - \theta^*\|^2}{2 \gamma (1 - 8 L m \gamma)}
\end{equation*} and $D_2 = \frac{D}{1 - a}$.
\end{proof}

\section{Supplementary experiments}
\label{sec:section_name}

We have tested all algorithms with other settings for the penalization. Namely, we considered:
\begin{description}
  \item[High lasso.] We take $\alpha = 1$ and $\lambda = 1 / \sqrt{n}$ and illustrate our results in Figure~\ref{fig:high_lasso}.
  \item[Low lasso.] We take $\alpha = 1$ and $\lambda = 1 / n$ and illustrate our results in Figure~\ref{fig:low_lasso}.
  \item[High ridge.] We take $\alpha = 0$ and $\lambda = 1 / \sqrt{n}$ and illustrate our results in Figures~\ref{fig:high_ridge1} and Figures~\ref{fig:high_ridge2}.
  \item[Low ridge.] We take $\alpha = 0$ and $\lambda = 1 / n$ and illustrate our results in Figure~\ref{fig:low_ridge}.
\end{description}

\begin{figure}
  \centering
  \includegraphics[width=0.48\textwidth]{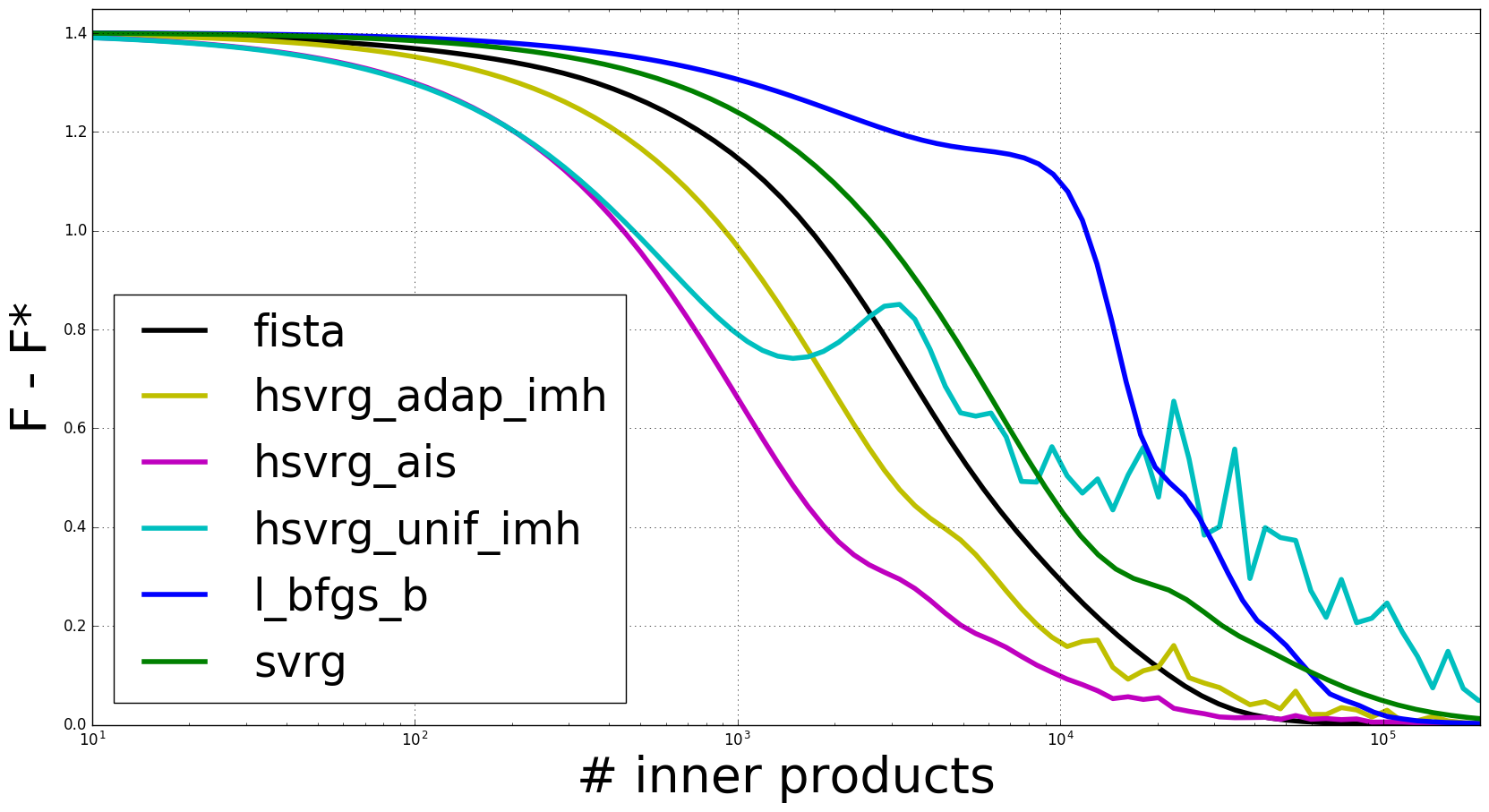}
  \includegraphics[width=0.48\textwidth]{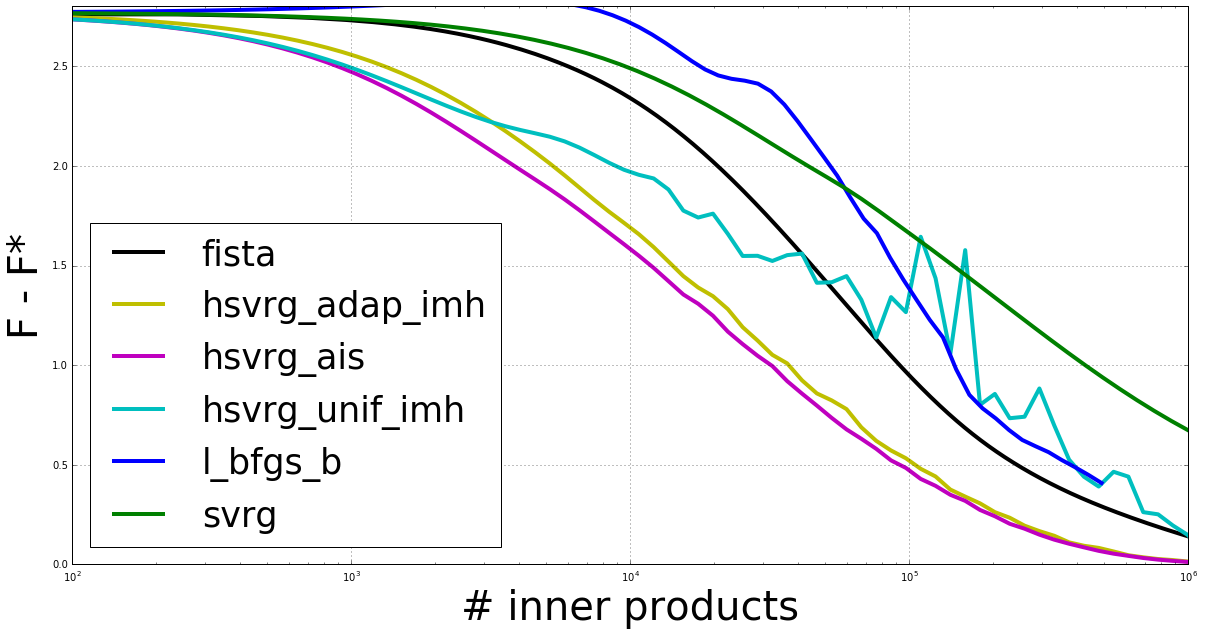}
  \includegraphics[width=0.48\textwidth]{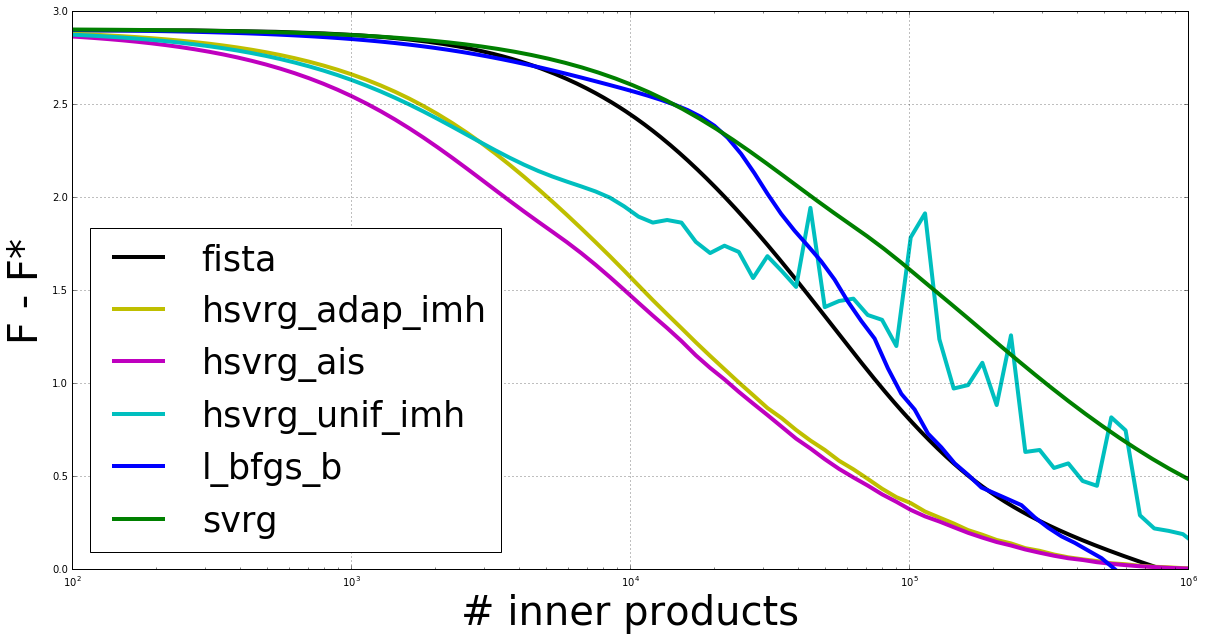}
  \includegraphics[width=0.48\textwidth]{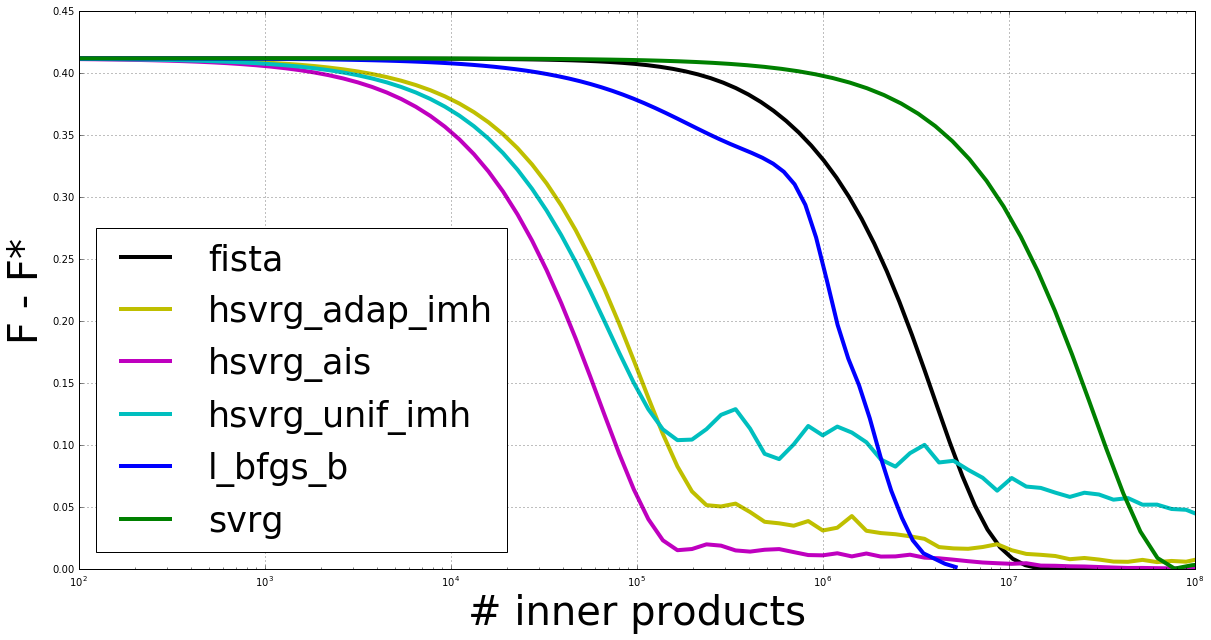}
  \caption{Distance to optimum of all algorithms on NKI70, Lymphoma, Luminal and on the simulated dataset (respectively from top to bottom) for \textbf{Low-ridge} penalization}
  \label{fig:low_lasso}
\end{figure}

\begin{figure}
  \centering
  \includegraphics[width=0.48\textwidth]{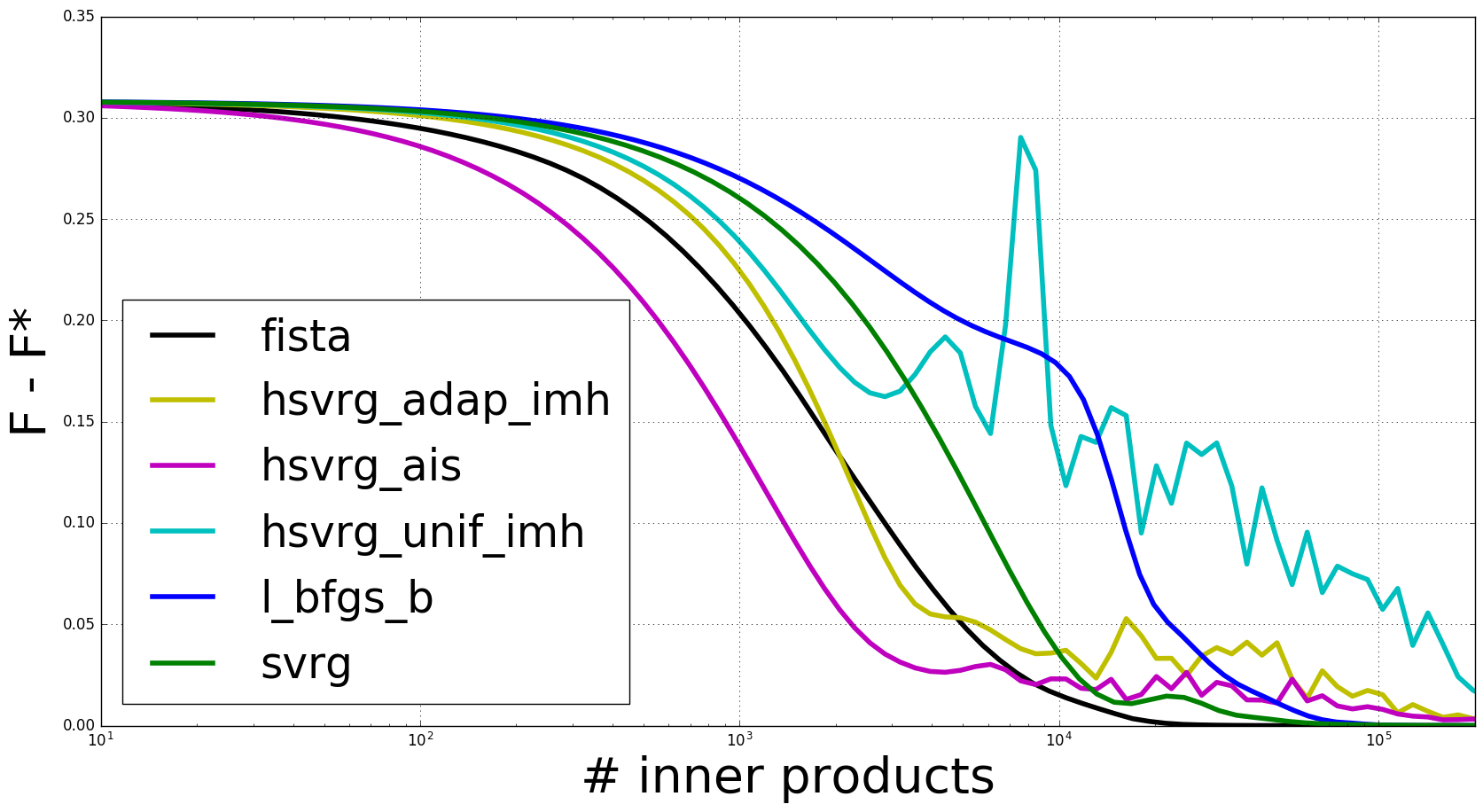}
  \includegraphics[width=0.48\textwidth]{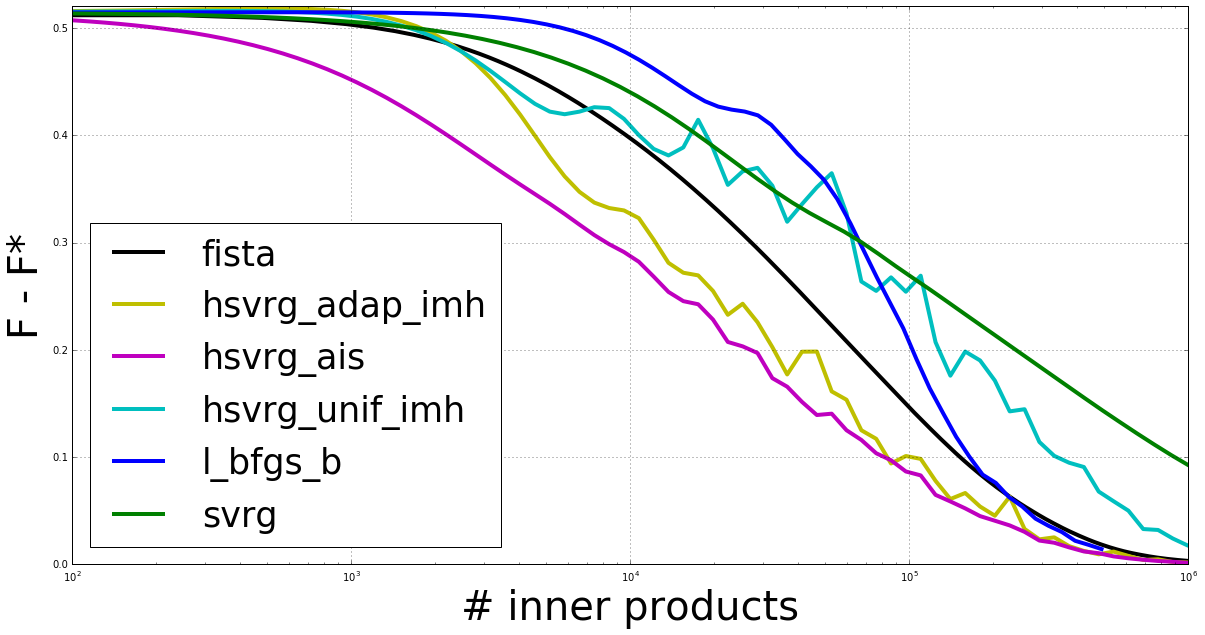}
  \includegraphics[width=0.48\textwidth]{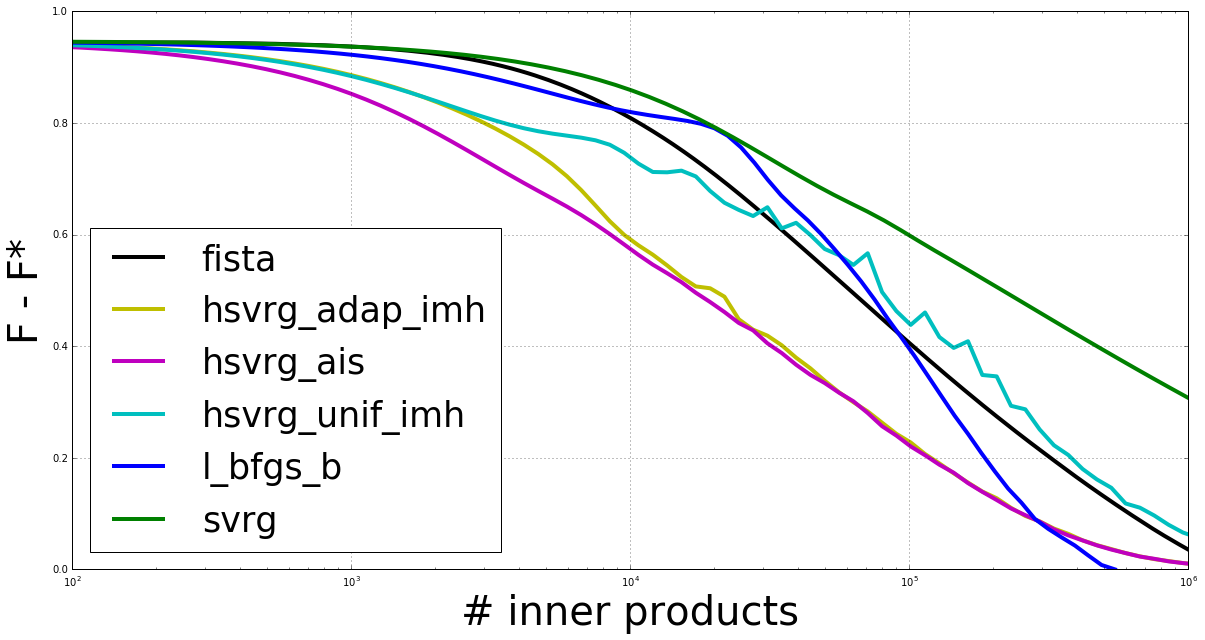}
  \includegraphics[width=0.48\textwidth]{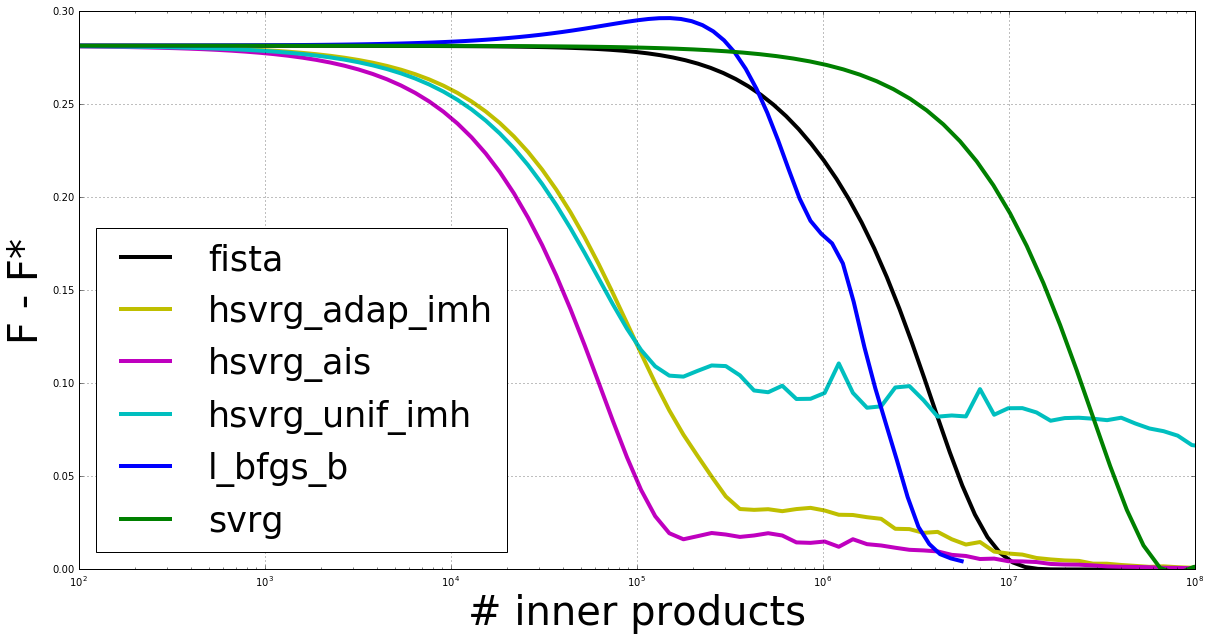}
  \caption{Distance to optimum of all algorithms on NKI70, Lymphoma, Luminal and on the simulated dataset (respectively from top to bottom) for \textbf{High-lasso} penalization}
  \label{fig:high_lasso}
\end{figure}

\begin{figure}
  \centering
  \includegraphics[width=0.48\textwidth]{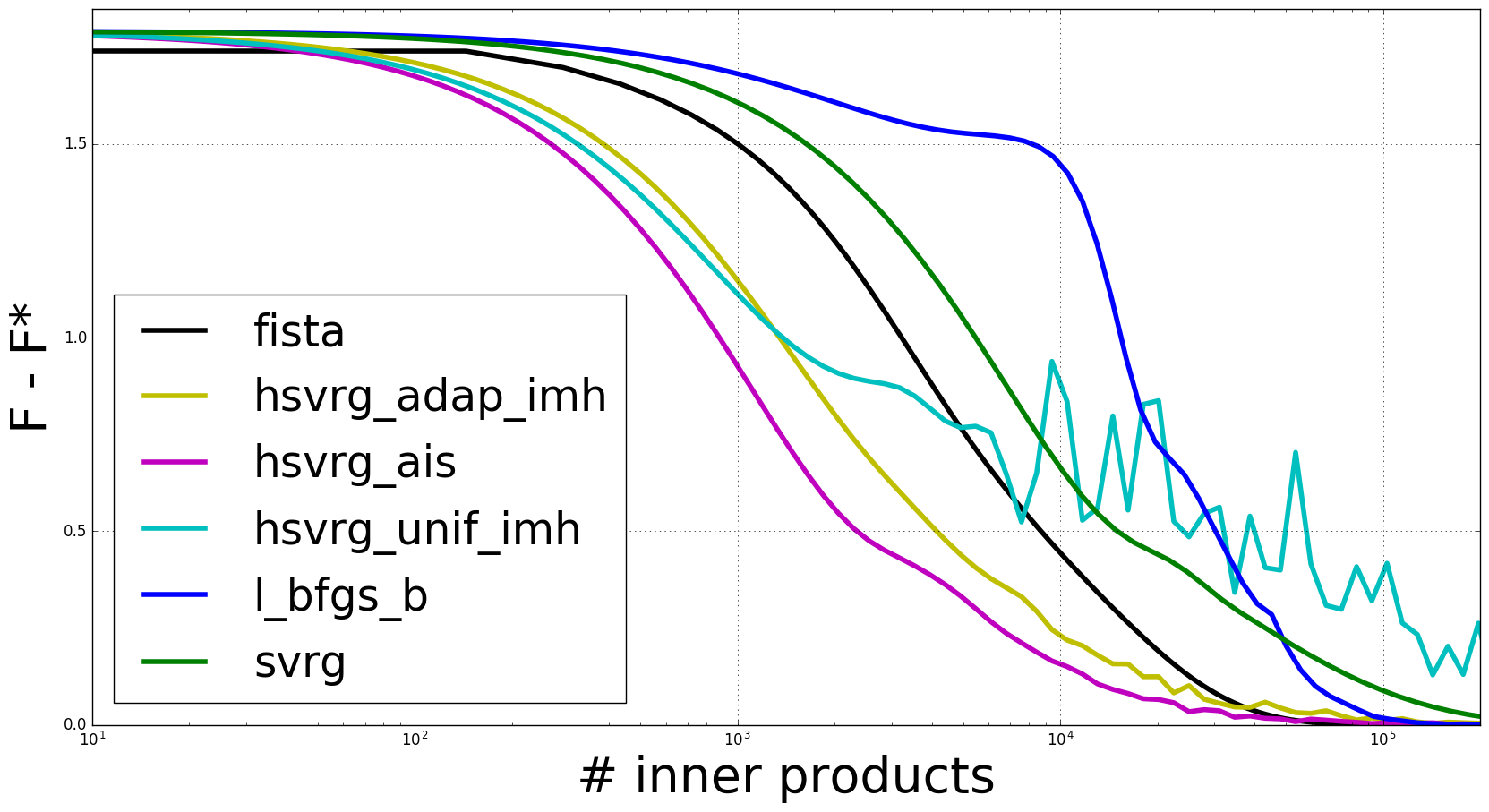}
  \includegraphics[width=0.48\textwidth]{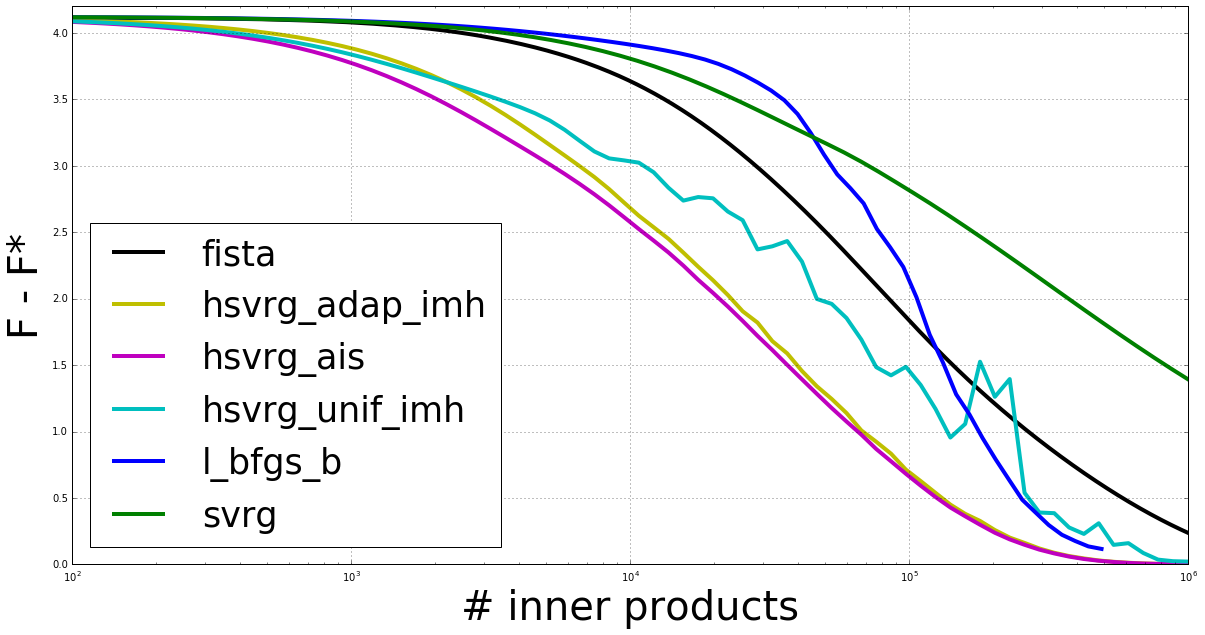}
  \includegraphics[width=0.48\textwidth]{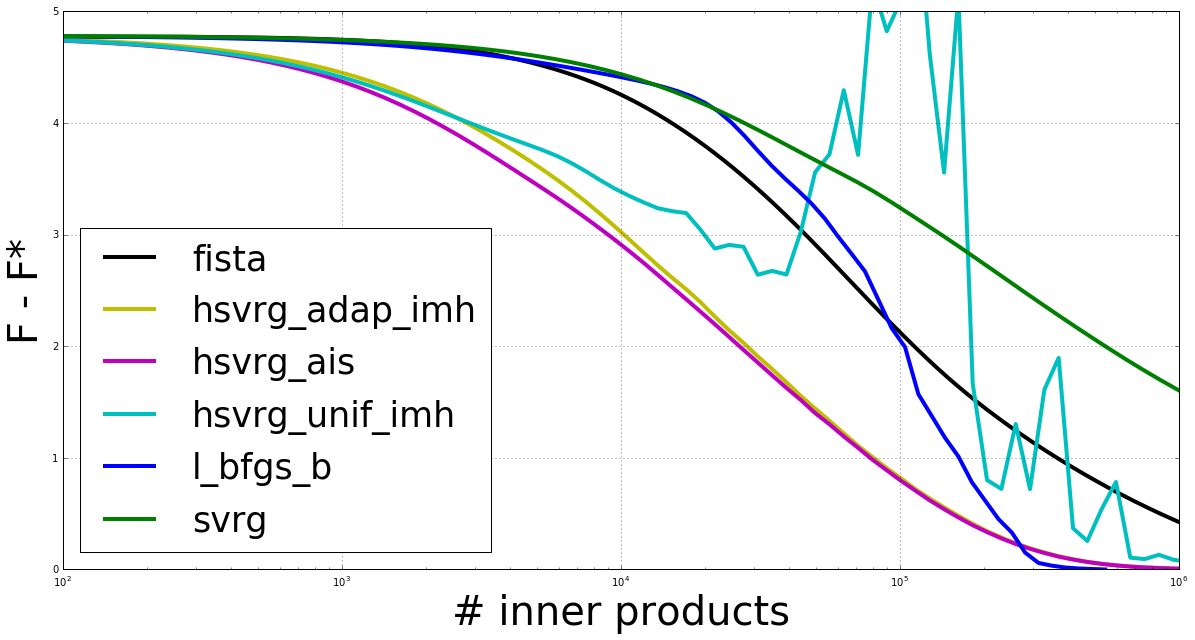}
  \includegraphics[width=0.48\textwidth]{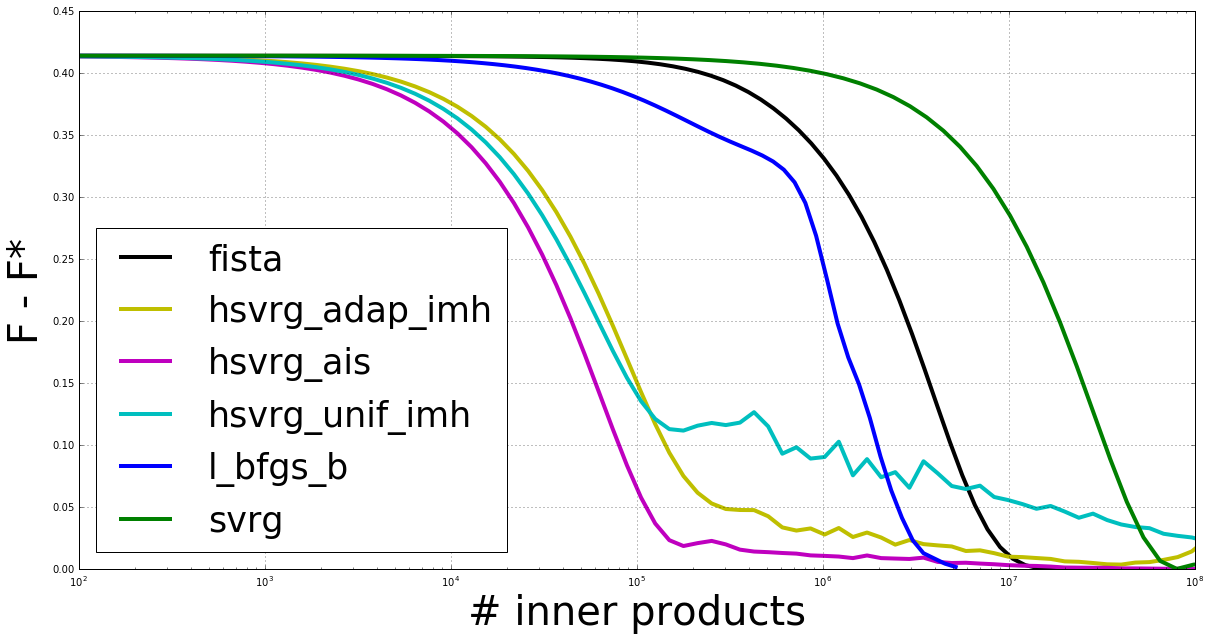}
\caption{Distance to optimum of all algorithms on NKI70, Lymphoma, Luminal and on the simulated dataset (respectively from top to bottom) for \textbf{Low-lasso} penalization}
  \label{fig:low_ridge}
\end{figure}

\section{Simulation of data}
\label{appen:simu}

With Cox model, the hazard ratio for the failure time $T_i$ of the $i^{th}$ patient takes the form:
\begin{equation*}
\lambda_i (t)=\lambda_0(t) \exp(x_i^\top \theta),
\end{equation*}
where $\lambda_0 (t)$ is a baseline hazard ratio, and $x_i \in \mathbb{R}^d$ the covariates of the $i^{th}$ patient. \\
We first simulate the feature matrix $X\in \mathbb{R}^{n \times d}$ as a Gaussian vector with a Toepliz covariance, where the correlation between features $j$ and $j'$ is equal to $\rho^{|j - j'|}$, for some $\rho \in (0, 1)$.

We want now to simulate the observed time $y_i$ that corresponds to $x_i$.
We denote the cumulative hazard function $\Lambda (t) = \int_0^t \lambda (s) ds$. Using the definition $\lambda (t) = \frac{f(t)}{1-F(t)}$, we know that $\Lambda (t) = - \log(1-F(t))$, where $f$ is the p.d.f. and $F$ is the c.d.f. of $T$. \\
It is easily seen that $\Lambda(T)$ has distribution $\mathrm{Exp}(1)$ (Exponential with intensity equal to~1): since $\Lambda$ is an increasing function, we have
\begin{align*}
\mathbb{P} (\Lambda (T) \ge t) &= \mathbb{P} (T \ge \Lambda^{-1}(t))
=\int_{\Lambda^{-1}(t)}^\infty f(s)ds \\
&= 1 - F(\Lambda^{-1}(t)) \\
&=\exp(-\Lambda(\Lambda^{-1}(t))) \\
&= \exp(-t),
\end{align*}
so that simulating failure times is simply achieved by using $T_i = \Lambda^{-1}(E_i)$ where $E_i \sim \mathrm{Exp}(1)$.
To compute $\Lambda$, we should have a parametric form for $\lambda_0$.
We assume that $T$ follows the Weibull distribution $\mathcal{W} (1,\nu)$ (when $x_i = 0$).
This choice is motivated by the following facts:
\begin{itemize}
\item Its cumulative hazard function is easy to invert. Indeed the hazard ratio is given by
$\lambda_0 (t) = \frac{\nu t^{\nu-1} e^{-t^\nu}}{1 - (1 - e^{-t^\nu})} = \nu t^{\nu - 1}$, so that $\Lambda^{-1} (y) = \left( \frac{y}{\exp(x_i^\top \theta)} \right)^{1 / \nu}$.

\item It enables two different trends - increasing or decreasing -- for the baseline hazard ratio that correspond to two typical behaviours in the medical field.
\begin{itemize}
\item decreasing: after taking a treatment, time before a side-effect's appearence
\item increasing: no memory process and patient's health is worsening
\end{itemize}
\end{itemize}

This method enables us to simulate $n$ failures times $T_1,T_2,\ldots,T_n$. \\
Then, we simulate $C_1, C_2,\ldots,C_n$ with exponential distribution. This finally gives us a set of observed times $(y_i)_{i=1}^n = (T_i \wedge C_i)_{i=1}^n$ and a set of censoring indicators $(\delta_i)_{i=1}^n = (\mathbbm{1}_{\{T_i \wedge C_i\}})_{i=1}^n$.


\section{Mini-batch sizing}
\label{appen:mb_size}
The mini-batch sizing question is essential since it is a natural trade-off between computing time and precision. We know that computing $\nabla f_i (\theta)$ needs the computation of $|R_i| \in \{1, \ldots, n_\text{pat} \}$ inner products.
One proves easily that computing a mini-batch $(1/n_\text{mb}) \nabla f_\mathcal{B} (\theta)$ - where $\mathcal{B}$ is the set of $n_\text{mb}$ index randomly picked - only needs $\max_{i \in \mathcal{B}} |R_i|$ inner products.
A simple probability  exercise gives us a key insight about the mini-batch size. \\
Let's assume that censoring is \emph{uniform} over the set $\{ 1, 2, \ldots, n_\text{pat} \}$ meaning that $| R_i | = c i$ with $c > 1$. Then, we denote $u_1, u_2, \ldots, u_{n_\text{mb}} \sim \mathcal{U}[n]$ the indices independently sampled to compute the mini-batch i.e. $\mathcal{B} = \{ u_i \}_{i=1}^{n_\text{mb}}$. Now we study the c.d.f. of $\max_{ 1 \le i \le n_\text{mb}} u_i$: for $k \in \{ 1, 2, \ldots, n \}$,
\begin{align*}
&\mathbb{P} \left( \max_{ 1 \le i \le n_\text{mb}} u_i \le k \right) = \prod_{i=1}^{n_\text{mb}} \mathbb{P}(u_i \le k) = \left( \frac{\floor*{k}}{n} \right)^{n_\text{mb}}, \\
&\mathbb{P} \left( \max_{ i \in \mathcal{B} } | R_{i} | \le c k \right) = \left( \frac{\floor*{k}}{n} \right)^{n_\text{mb}}, \\
&\mathbb{P} \left( \max_{ i \in \mathcal{B} } | R_{i} | \ge a \right) = 1 - \left( \frac{\floor*{a / c}}{n} \right)^{n_\text{mb}}, \mbox{ for } a < n_\text{pat}
\end{align*}
The third equation leads us to consider $1\ll n_\text{mb} \ll n$ to prevent both $ \max_{ i \in \mathcal{B} } | R_{i} |$ and $|\mathcal{B}|$ from being too large. This is why we used $n_\text{mb} = 0.1 n$ or $n_\text{mb} = 0.01 n$, depending of the size $n$ of the dataset. \\



\newpage

\bibliography{paper.bib}

\bibliographystyle{plainnat}

\clearpage

\end{document}